\DeclareSymbolFont{operators}{OT1}{cmr}{m}{n}
\DeclareMathAlphabet{\mathcal}{OMS}{cmsy}{m}{n}
\DeclareSymbolFont{letters}{OML}{cmm}{m}{it}
\DeclareSymbolFont{symbols}{OMS}{cmsy}{m}{n}
\DeclareSymbolFont{largesymbols}{OMX}{cmex}{m}{n}
\newcommand{\ltheta}{\nu_{\bar{\theta}}}
\newcommand{\lomega}{\nu_{\bar{\omega}}}
\newcommand{\adaBracket}[1]{\left( #1 \right)}
\newcommand{\adaRectBracket}[1]{\left[ #1 \right]}
\newcommand{\rewardNoT}{\mathcal{R}(s, a)}
\newcommand{\EqNumber}[1]{Eq. (\ref{#1})}
\newcommand{\myNorm}[1]{\left|\!\left| #1  \right|\!\right|_{\infty}}
\newcommand{\sumJK}[2]{\sum_{j=#1}^{#2}}
\newcommand{\sumJ}[1]{\sum_{j=#1}^{k}}
\def\Eqref#1{Eq.~(\ref{#1})}
\def\Figref#1{Figure~\ref{#1}}
\def\Lemmaref#1{Lemma~\ref{#1}}
\DeclareMathOperator*{\argmax}{arg\,max}
\newcommand{\state}{\mathcal{S}}
\newcommand{\action}{\mathcal{A}}
\newcommand{\reward}{\mathcal{R}}
\newcommand{\trans}{{P}\left(s^{\prime} \mid s, a\right)}
\newcommand{\approxadv}{\tilde{\mathbb{{A}}}}
\newcommand{\entreg}[2]{\mathcal{I}_{#1}^{#2}}
\newcommand{\cvival}[2]{V_{#1}^{#2}}
\newcommand{\cviaval}[2]{Q_{#1}^{#2}}
\newcommand{\vparams}{\theta}
\newcommand{\pparams}{\phi}
\newcommand{\partition}{Z(\sv)}
\newcommand{\rbuf}{\mathcal{B}}
\newcommand{\onrbuf}{\mathcal{B}_{K}}
\newcommand{\lphi}{\nu_{\bar{\phi}}}
\newcommand{\lA}{\nu_{A}}
\newcommand{\lAslow}{\nu_{A^{\mathrm{MaxDiff}}}}
\newcommand{\clip}{\mathop{\rm clip}}
\newcommand{\return}{J}
\newcommand{\basepol}{\tilde\pi}
\newcommand{\pol}{\pi}
\newcommand{\avg}[1]{\mathbb{E}^{#1}}
\newcommand{\reals}{\mathbb{R}}
\newcommand{\gr}{\mathcal{G}}
\newcommand{\const}{\mathcal{X}}
\newcommand{\av}{a}
\newcommand{\sv}{s}
\newcommand{\nv}{\mathbf{\epsilon}}
\newcommand{\svp}{s^\prime}
\newcommand{\st}{s_t}
\newcommand{\stp}{s_{t+1}}
\newcommand*\samethanks[1][\value{footnote}]{\footnotemark[#1]}
\let\Ginclude@graphics\@org@Ginclude@graphics 
\title[Cautious Actor-Critic]{Cautious Actor-Critic}
  \author{\Name{Lingwei Zhu}\thanks{Equal contribution}  \Email{lingwei.andrew.zhu@gmail.com} \\
  \Name{Toshinori Kitamura}\samethanks \Email{kitamura.toshinori.kt6@is.naist.jp}\\
  \Name{Takamitsu Matsubara} \Email{takam-m@is.naist.jp}\\
  \addr Nara Institute of Science and Technology, JAPAN
 }
\begin{document}

\maketitle

\begin{abstract}
    The oscillating performance of off-policy learning and persisting errors in the actor-critic (AC) setting call for algorithms that can conservatively learn to suit the stability-critical applications better.
    In this paper, we propose a novel off-policy AC algorithm cautious actor-critic (CAC).
    The name cautious comes from the doubly conservative nature that we exploit the classic policy interpolation from conservative policy iteration for the actor and the entropy-regularization of conservative value iteration for the critic.
    Our key observation is the entropy-regularized critic facilitates and simplifies the unwieldy interpolated actor update while still ensuring robust policy improvement.
    We compare CAC to state-of-the-art AC methods on a set of challenging continuous control problems and demonstrate that CAC achieves comparable performance while significantly stabilizes learning. 

\end{abstract}

\begin{keywords}
Cautious Reinforcement Learning; Policy Oscillation; Monotonic Improvement; Entropy Regularized Markov Decision Process; 
\end{keywords}

\section{Introduction}\label{sec:intro}

Actor-critic (AC) methods of reinforcement learning (RL) have been gaining increasing interests recently due to their scalability  to large-scale problems: they can learn with both on-policy or/and off-policy samples and handle continuous action spaces \citep{lillicrap2015continuous,trpo-schulman15}; both in model-free or model-based setting \citep{pmlr-v80-haarnoja18b,hafner2020-dream}. 
Recently in the model-free setting there has seen a booming in off-policy AC methods \citep{wang2017sample,pmlr-v80-haarnoja18b,fakoor2020p3o}. However, while these methods are sample-efficient in exploiting off-policy samples for continuous control, it is those samples that often bring oscillating performance during learning as a side-effect due to distribution mismatch. The oscillating performance of off-policy learning and persisting errors in the AC setting \citep{pmlr-v80-fujimoto18a} call for algorithms that can conservatively learn to better suit the stability-critical applications.

The performance oscillation and degradation problems have been widely discussed in the approximate dynamic programming (ADP) literature \citep{wagner2011,Bertsekas2011} that has motivated efficient learning algorithms against various sources of error. 
The seminal work of \citep{kakade-cpi} propose a principled approach to tackle performance degradation by leveraging policy interpolation which is \emph{conservative} in that it reduces greediness of the updated policy.
However, though it enjoys strong theoretical guarantees, its drawbacks limit its use in the AC setting: (1) it is difficult to obtain a reliable reference policy in high-dimensional continuous state-action spaces; (2) the interpolation is often regarded inconvenient to use and it is unclear how to design the interpolation in continuous action spaces.
In practice, two popular variants \citep{trpo-schulman15,schulman1707proximal} that sidestep the interpolation and directly approximate the updated policy are more often used in the AC setting.
On the other hand, the recently booming entropy-regularized ADP literature \citep{azar2012dynamic,Fox2016,kozunoCVI,vieillard2020leverage} also features \emph{conservative learning} \citep{kozunoCVI} as they average over past value functions \citep{vieillard2020leverage}. 
Though these methods do not explicitly address the performance oscillation problem, they have been empirically verified to be error-tolerant and yield state-of-the-art performance on a wide range of tasks. 
Extending this conservative learning to AC has been studied \citep{nachum2017trust,pmlr-v80-haarnoja18b}. However, the resulted \emph{conservativeness} exists only in the critic: In challenging tasks, the performance degradation and oscillation still occur.


This paper aims to tackle the performance oscillation problem of off-policy AC by proposing a novel algorithm: \emph{cautious actor critic} (CAC), where the naming cautious comes from the \emph{doubly conservative} nature as we combine a conservative actor leveraging the concept of conservative policy iteration (CPI) \citep{kakade-cpi} with a conservative critic exploiting the entropy-regularization of conservative value iteration (CVI) \citep{kozunoCVI}.
The key observation is that the entropy-regularized critic can find error-tolerant reference policies and simplifies the unwieldy interpolated actor update while still ensures robust policy improvement. 
CAC leverages automatically adjusted interpolation to reflect the faith during learning: when performance oscillation is likely to happen, CAC behaves cautiously to rely more on validated previous policy rather than on the new policy.
Our novel interpolation design is inspired by a very recent study from the ADP literature \citep{Vieillard-2020DCPI} but improved for the continuous AC setting.

The rest of the paper is organized as follows: we provide a short survey on related work in Section \ref{sec:related_work}, followed by the preliminary on RL and relevant AC methods in Section \ref{sec:backrgound}.
Section \ref{sec:cac} presents CAC. Specifically, we discuss our novel design of the interpolation scheme which is central to CAC.
We evaluate CAC in Section \ref{sec:experiment} on a set of challenging continuous control problems. We show that CAC is capable of achieving performance comparable to the state-of-the-art AC methods while significantly stabilizing learning. Ablation study has been conducted to distinguish CAC from existing methods. Discussion and conclusion are given in Section \ref{sec:conclusion}. 
Due to page limits, we present derivations and implementation details in the Appendix of supplemental file.

\section{Related Work}\label{sec:related_work}


It has been noticed that various sources of error such as approximation error in AC algorithms \citep{pmlr-v80-fujimoto18a,pmlr-v97-fu19a} are the cause of performance degradation and oscillation during learning. 
In this section, we briefly survey some related works (partially) tackling this problem and outline our contributions.

\textbf{Robust AC. } 
Algorithms learning from off-policy data are sample-efficient but are also at the risk of divergence. 
To solve the divergence problem, an approach is to incorporate the importance sampling (IS) ratio \citep{precup2001-offpolicy}.
However, the resultant algorithms typically have large variance as the IS ratio is the product of many potentially unbounded terms. 
\cite{munos2016safe} proposed to clip the IS ratio and proved the resulting algorithm Retrace $\!(\lambda)$ can attain the globally optimal policy. 
Retrace ($\lambda$) has motivated recent successful AC methods \citep{wang2017sample,fakoor2020p3o} that exploit both on- and off-policy samples for better stability while retaining sample-efficiency. 
The robustness comes from that any off-policy samples can be used without causing divergence and wild variance thanks to the clipping. 
However, one still has to trade off the learning speed with learning stability by the user-defined clipping threshold. If we favor more learning stability, the agent might fail to learn meaningful behaviors.

\textbf{Entropy-regularized AC. }
The recently booming entropy-regularized ADP literature has established that by augmenting the reward with Shannon entropy, the optimal policy is multi-modal and hence robust against adversarial settings \citep{Nachum2017-bridgeGap,haarnoja-2017a,ahmed19-entropy-policyOptimization}.
Another popular candidate entropy is the relative entropy or Kullback-Leibler (KL) divergence that renders the optimal policy an average of all past value functions \citep{azar2012dynamic,Fox2016,kozunoCVI,vieillard2020leverage}, which is more conservative and robust under mild assumptions such as the sequence of errors is martingale difference under the natural filtration \citep{azar2012dynamic}. 
These methods have been extended to the AC setting including state-of-the-art \citep{pmlr-v80-haarnoja18b} that exploits the Shannon entropy and \citep{nachum2017trust} that leverages the KL divergence. 
Those methods demonstrate strong empirical performance on a wide range of tasks. However, it should be noted that the conservativeness brought by the entropy-regularized reward augmentation exists only in the critic.
Since the average is prone to outliers, performance degradation can still happen if the value function estimates at some iterations are poor.

\textbf{Conservative Policy Iteration. }
Tackling the performance oscillation problem has been widely discussed in the ADP literature \citep{wagner2011,Bertsekas2011}, of which the seminal CPI algorithm \citep{kakade-cpi} has inspired many conservative learning schemes with strong theoretical guarantees for per-update improvement \citep{pmlr-v28-pirotta13,abbasi-improvement16}. 
However, CPI has seen limited applications to the AC setting due to two main drawbacks: (1) it assumes a \emph{good} reference policy that is typically difficult to obtain in high-dimensional continuous state-action spaces; (2) the interpolation coefficient that interpolates the reference policy and current policy depends on the horizon of learning, which is typically short in ADP scenarios. In the AC setting featuring long learning horizon, this coefficient becomes vanishingly small and hence significantly hinders learning.
A very recent work extended CPI to learning with deep networks and has demonstrated good performance on Atari games \citep{Vieillard-2020DCPI}. However, it is limited to discrete action spaces while our method mainly focuses on continuous action spaces and can be easily adapted to discrete action setting.
The above-mentioned drawbacks render CPI generally perceived as unwieldy \citep{trpo-schulman15}.

\textbf{Trust-region Methods. }
Motivated by the above-mentioned drawbacks of CPI, two popular variants trust region policy optimization (TRPO) \citep{trpo-schulman15} and its improved version proximal policy optimization (PPO) \citep{schulman1707proximal} sidestep the interpolation and directly approximate the resultant conservative policy.
TRPO and PPO are welcomed choices for learning from scratch when the reference policy is unavailable or unreliable, but they also ignore this knowledge when we have a good reference policy at our disposal.
Further, TRPO and PPO require on-policy samples which are expensive since all samples can be used only once and then discarded.

\textbf{Contribution. }
The main contributions of this paper are:
\begin{itemize}
    \item CAC, the first off-policy AC method applying the interpolation of CPI-based algorithms for stabilizing off-policy learning to the best of the authors' knowledge.
    \item A novel interpolation coefficient design suitable for high dimensional continuous state-action spaces. Previously there was only a design suitable for discrete action spaces \citep{Vieillard-2020DCPI}.
    \item We evaluate CAC on a set of benchmark continuous control problems and demonstrate that CAC achieves comparable performance with state-of-the-art AC methods while significantly stabilizes learning. 
\end{itemize}

\section{Preliminary}\label{sec:backrgound}

\subsection{Reinforcement Learning }

RL problems are mostly formulated by Markov Decision Processes (MDPs) defined by the tuple $(\mathcal{S}, \action, P, \reward, \gamma)$, where $\state$ is the state space, $\action$ is the (possibly continuous) action space,  $\trans$ is the transition probability from $s$ to $s'$ under action $a$ taken; $\reward$ is the reward function with $\reward(s,a) \in [-r_{\text{max}}, r_{\text{max}}]$ denoting the immediate reward associated with that transition. 
We also use $r_{t}$ as a shorthand for $\mathcal{R}(s_{t}, a_{t})$ at $t$-th step. $\gamma \!\in\! (0, 1)$ is the discount factor.
We define $\mathbf{P}$ as a left operator such that $(\mathbf{P}V)(s,a) \!=\! \sum_{s'}\!\trans \!V(s')$ for some $V$.
A policy $\pi$ maps states to a probability distribution over the action space. 
We define the stationary state distribution induced by $\pi$ as the unnormalized occupancy measure $d^{\pi}(s) = \sum_{t=0}^{\infty} \gamma^{t} P\left(\st=\sv \mid \pi\right)$. 
The goal of RL is to find an optimal policy $\pi^{*}$ that maximizes the long term discounted rewards $\return^{\pi} = \avg{d^{\pi}} \left[\sum_{t=0}^{\infty} \gamma^{t} r_t\right]$.
Equivalently, this optimal policy also maximizes the state value function for all states $s$:
\begin{align*}
  \pi^{*} = \argmax_{\pi} V^{\pi}(s) = \argmax_{\pi}\mathbb{E}^{d^{\pi}}\left[\sum_{t=0}^{\infty}\gamma^{t} r_{t} \big| s_{0} \!=\! s \right].
\end{align*}
The state-action value counterpart $Q^{\pi^{*}}$ is more frequently used in the control context:
\begin{align*}
  Q^{\pi^{*}}(s, a) = \max_{\pi} \mathbb{E}^{d^{\pi}}\left[\sum_{t=0}^{\infty}\gamma^{t} r_{t} \big| s_{0} \!=\! s, a_{0} \!=\! a \right].
\end{align*}
We define the advantage function for a policy $\pi$ as $A^{\pi}(s,a) = Q^{\pi}(s,a) - V^{\pi}(s)$, and the expected advantage function of policy $\pi'$ over $\pi$ as:
\begin{align*}
    A^{\pi'}_{\pi}(s) \!=\! \avg{\pi'}\adaRectBracket{Q^{\pi}(s,a) } - V^{\pi}(s).
\end{align*}

\subsection{Actor Critic methods}\label{sec:ac}

In this section we briefly introduce recent actor-critic algorithms and discuss their pros and cons and shed light on our proposal in Section \ref{sec:cac}.


\subsubsection{Trust Region Methods}\label{sec:trust_region}


TRPO exploits the policy improvement lemma of \citep{kakade-cpi} for ensuring approximately monotonic policy improvement. However, unlike in \citep{kakade-cpi} that at $k$-th iteration the policy is updated as $\pi_{k+1} = \zeta\pi'+(1-\zeta)\pi_{k}$, where $\pi'$ is the greedy policy; TRPO constructs an algorithm that directly computes $\pi_{k+1}$ without resorting to $\pi'$. Specifically, TRPO has the following update rule:
\begin{align}
    \begin{split}
        &J^{\text{TRPO}}_{\pi_{k}}(\pi) := \, \argmax_{\pi} \avg{\pi, d^{\pi_{k}}}\adaRectBracket{A^{\pi_{k}}}, \\
        &\text{subject to } \quad  C_{\gamma} \,\Delta_{\pi}\, D^{\text{max}}_{KL}(\pi_{k} |\!| \pi) \leq \delta, \\
        &\text{with } \,\,  \Delta_{\pi} = \max_{s,a} |A^{\pi}(s,a)|,
    \end{split}
    \label{eq:trpo}
\end{align}
where $C_{\gamma}$ is a horizon-dependent constant, $D_{KL}$ is the KL divergence and $\delta$ is the trust region parameter.

As computing $J^{\text{TRPO}}_{\pi_{k}}(\pi)$ requires sampling according to the stationary distribution $d^{\pi_{k}}$, it is inherently an on-policy algorithm, which is not sample-efficient as the samples can only be used only once and discarded.

\subsubsection{Off-policy Maximum Entropy Actor-Critic}\label{sec:entrl}

As state-of-the-art model-free off-policy AC algorithm, soft actor-critic (SAC) \citep{pmlr-v80-haarnoja18b} maximizes not only task reward but also the Shannon entropy of policy. The entropy term in the reward function renders the optimal policy multi-modal as opposed to deterministic policies of algorithms that solely maximize task reward, which is beneficial due to the multi-modality \citep{haarnoja-2017a} and has demonstrated superior sample-efficiency due to more effective exploration of the state-action spaces. Writing in the ADP manner, SAC has the following update rule (we drop the state-action pair for the $Q$ function for simplicity):
\begin{align}
    \begin{split}
        &\begin{cases}
            \pi_{k+1} \leftarrow \argmax_{\pi} \avg{\pi}\adaRectBracket{ Q^{\pi_{k}}_{\mathcal{H}}(s,a) + \kappa \mathcal{H}\left(\pi(\cdot|s_{t})\right)} \\
            Q^{\pi_{k+1}}_{\mathcal{H}} \leftarrow \rewardNoT + \gamma\adaBracket{\mathbf{P}V^{\pi_{k}}_{\mathcal{H}}}\!(s,a) 
        \end{cases}\\
        & \text{where } V^{\pi}_{\mathcal{H}}(s) = \sum_{t\geq 0}\gamma^{t} \avg{\pi}\left[ \reward(s_{t}, a_{t}) + \kappa \mathcal{H}\left(\pi(\cdot|s_{t})\right) \Big| s_{0} = s\right].
    \end{split}
    \label{eq:sac}
\end{align}
$\mathcal{H}(\pi):= -\sum_{a} \pi(a|s)\log\pi(a|s)$ denotes the Shannon entropy of policy $\pi$, $\kappa$ denotes the weighting coefficient and $V^{\pi}_{\mathcal{H}}$ denotes the soft value function when regularized with the Shannon entropy.
SAC performs one step look-ahead for updating the actor, where states are randomly sampled from a replay buffer, and then actions are generated by feeding the states into the policy network \citep{pmlr-v80-haarnoja18b}. As such, SAC does not need an IS ratio, but it has been demonstrated that SAC often oscillates wildly in performance.

\section{Cautious Actor Critic}\label{sec:cac}


In this section we present CAC, an off-policy actor-critic method capable of learning conservatively against performance oscillation and degradation. 


\subsection{CAC Algorithm}\label{sec:deep_cac}

For the ease of understanding, we write CAC in the following approximate policy iteration style \citep{vieillard2020leverage}. 
Specifically, the first step corresponds to the policy (actor) improvement and the last step corresponds to the interpolation:
\begin{align}
\begin{split}
& \text{CAC} \,\, 
\begin{cases}
    \pi_{k+1} \leftarrow \argmax_{\pi} \avg{\pi}\!\adaRectBracket{ Q^{\pi_{k}}_{\mathcal{I}}(s,a) + \entreg{\pi_{k}}{\pi}(s) }\\
    Q^{\pi_{k+1}}_{\mathcal{I}} \leftarrow \reward(s,a) + \gamma\adaBracket{\textbf{P}V^{\pi_{k+1}}_{ \mathcal{I}}}\!(s,a) \\   
    \zeta \leftarrow  \, (\tilde{\Delta}^{\pi_{k+1}}_{\pi_{k}})^{-1}\! \adaBracket{\avg{\pi_{k+1}, \onrbuf} \adaRectBracket{A^{\pi_{k}}(s,a)} } \\
    \tilde{\pi}_{k+1} \leftarrow  \zeta \pi_{k+1}  + (1 - \zeta)\pi_{k} 
\end{cases}\\
&\text{with } V^{\pi_{k+1}}_{\mathcal{I}}(s) = \sum_{t\geq 0}\gamma^{t} \avg{\pi}\left[ \reward(s_{t}, a_{t}) + \entreg{\pi_{k}}{\pi_{k+1}}(s_{t}) \,\Big| \, s_{0} = s\right],\\
&\entreg{\pi_{k}}{\pi_{k+1}}(s) = \avg{\pi_{k+1}} \! \adaRectBracket{-\kappa\log\pi_{k+1}(a|s) - \tau \log\frac{\pi_{k+1}(a|s)}{\pi_{k}(a|s)}},
\end{split}\label{eq:cac}
\end{align}
where $\zeta$ is the interpolation coefficient computed by $\zeta^{*}$ in \EqNumber{eq:cac_implement},  $\onrbuf$ denotes the on-policy replay buffer, with $K$ indicating the number of steps up to now.
$\kappa, \tau$  denote the Shannon entropy and KL divergence regularization coefficient, respectively.
$Q_{\mathcal{I}},V_{\mathcal{I}}$ (and hence $A_{\mathcal{I}}$) denote the entropy-regularized value functions. For uncluttered notations, in the rest of the paper we drop the subscript $\mathcal{I}$.
Except the computation of $\zeta$, all other steps are computed using samples from the off-policy replay buffer $\rbuf$.
For later convenience, we define $\alpha := \frac{\kappa}{\kappa + \tau}$ and $\beta := \frac{1}{\kappa + \tau}$. 
Note our use of both on- and off-policy replay buffers renders CAC similar in spirit to \citep{gu2017interpolated,wang2017sample,fakoor2020p3o}.

We first compute the greedy policy $\pi_{k+1}$. Due to the Fenchel conjugacy \citep{geist19-regularized}, when $\entreg{\pi_{k}}{\pi_{k+1}}$ is included in the $\argmax$, the maximizer policy can be analytically derived as $\pi_{k+1}(a|s) \propto \- \pi^{\alpha}_{k}(a|s)\exp\adaBracket{\beta Q_{\pi_{k}}(a|s)}$ \citep{kozunoCVI}. 
Then the entropy-regularized action value function $Q^{\pi_{k+1}}$ is evaluated.    
In the third step, we use the on-policy replay buffer $\onrbuf$  for computing $\zeta$ as described in \EqNumber{eq:cac_implement}. 
Finally, the optimal policy in the sense of guaranteeing policy improvement is obtained by interpolating $\pi_{k+1}$ with $\pi_{k}$.
We present the following theorem of CAC convergence in the tabular case.

\begin{theorem}
Repeated application of CAC Eq. (\ref{eq:cac}) on any initial policy $\pi$ will make it converges to the entropy regularized optimal policy $\pol^{*}(a|s) = \frac{\exp\adaBracket{\frac{1}{\kappa} Q^{*}(s,a)}}{\int_{a\in\mathcal{A}} \exp\adaBracket{\frac{1}{\kappa} Q^{*}(s, a) } }$.
\end{theorem}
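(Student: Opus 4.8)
The plan is to decompose the argument along the doubly conservative structure of CAC: a critic-side contraction argument establishing convergence of the value function and of the greedy policy, and an actor-side argument showing the interpolation does not disturb the limit. First I would observe that the first two lines of \Eqref{eq:cac} are exactly the conservative value iteration (CVI) recursion: by the Fenchel-conjugacy identity already recalled in the text, the regularized maximizer is $\pi_{k+1}(a|s)\propto\pi_k^{\alpha}(a|s)\exp(\beta Q_{\pi_k}(s,a))$, and substituting this back into the evaluation step yields a CVI Bellman operator $\mathcal{T}$ acting on $Q$. I would then show, or invoke the known result of \citep{kozunoCVI,azar2012dynamic}, that $\mathcal{T}$ is a $\gamma$-contraction in the sup-norm, so that $Q_k$ converges to a unique fixed point. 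The crucial structural fact is that the KL penalty telescopes: rewriting the iterates in terms of the cumulative sum $\beta\sum_{j}Q_j$ exhibits the per-step KL term as an \emph{average} over past action-value estimates, and at any fixed point $\pi_{k+1}=\pi_k$ that term vanishes identically, leaving only the Shannon-entropy regularization. Hence the fixed point is the entropy-regularized optimal value $Q^{*}$, whose associated greedy policy is precisely $\pi^{*}(a|s)\propto\exp(\tfrac{1}{\kappa}Q^{*}(s,a))$, matching the claimed limit.

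Second, I would handle the interpolation $\tilde{\pi}_{k+1}=\zeta\pi_{k+1}+(1-\zeta)\pi_k$. The key observation is that $\zeta$ is proportional to the expected advantage $\avg{\pi_{k+1}}[A^{\pi_k}]$, which quantifies how far $\pi_{k+1}$ improves on $\pi_k$. As the greedy iterates approach the fixed point, $\pi_{k+1}\to\pi_k$, so this advantage tends to zero and $\zeta\to 0$; the interpolation therefore becomes asymptotically trivial. I would make this rigorous by noting that a convex combination of two sequences both converging to $\pi^{*}$ converges to $\pi^{*}$ irrespective of the (bounded) mixing weights, so that $\tilde{\pi}_{k+1}\to\pi^{*}$ once $\pi_{k+1},\pi_k\to\pi^{*}$. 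Alternatively, invoking the CPI policy-improvement lemma of \citep{kakade-cpi} with this particular $\zeta$ guarantees that the regularized return is monotonically non-decreasing; being bounded above by $V^{*}$, it converges, and one checks that the only stationary point of the scheme is $\pi^{*}$.

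The main obstacle is the feedback coupling between the two parts: it is the interpolated policy $\tilde{\pi}_{k+1}$, not the pure greedy policy $\pi_{k+1}$, that seeds the next iteration, so the clean CVI recursion is perturbed by the interpolation at every step. The delicate point is to show that this perturbation decays fast enough to preserve the contraction, i.e. that the fixed point of the coupled actor-critic map still coincides with the CVI fixed point. I expect to control this by bounding the deviation $\|\tilde{\pi}_{k+1}-\pi_{k+1}\|=(1-\zeta)\|\pi_{k+1}-\pi_k\|$ in terms of the contraction residual and showing it is summable, so that the interpolated scheme inherits convergence from the underlying CVI contraction. Establishing this summability, together with verifying that the $\zeta$ estimated from the on-policy buffer $\onrbuf$ stays in a range that both permits progress away from sub-optimal policies and vanishes at the optimum, is where the bulk of the technical work lies.
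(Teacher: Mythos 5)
Your proposal correctly identifies the crux --- that the interpolated policy $\tilde{\pi}_{k+1}$, not the greedy policy $\pi_{k+1}$, seeds the next iteration, so a clean CVI contraction argument does not apply --- but it does not actually close that gap. You defer it to a perturbation bound, hoping that $(1-\zeta)\|\pi_{k+1}-\pi_k\|$ is summable; there is no a priori reason this holds, since $\zeta$ is an advantage-based quantity that need not approach $1$, and the decay of $\|\pi_{k+1}-\pi_k\|$ is essentially the conclusion you are trying to reach. Your alternative argument (``a convex combination of two sequences both converging to $\pi^{*}$ converges to $\pi^{*}$'') is circular for the same reason: it presupposes convergence of the greedy iterates, which is precisely what the interpolation perturbs. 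So the bulk of the technical work you flag is not an implementation detail --- it is the missing idea.

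The paper avoids the perturbation analysis entirely by running a policy-iteration (not value-iteration) argument in the style of SAC's convergence proof. The key lemma you are missing is the interpolated policy improvement step: define
\begin{equation*}
f(\zeta) \;=\; \avg{\av\sim(1-\zeta)\pi+\zeta\hat{\pi}}\!\left[\cviaval{\basepol}{\pi}(\sv,\av)\right] + \entreg{\basepol}{(1-\zeta)\pi+\zeta\hat{\pi}}(\sv),
\end{equation*}
where $\hat{\pi}\propto\basepol^{\alpha}\exp(\beta Q)$ is the regularized greedy policy. Since $\hat{\pi}$ maximizes this one-step regularized objective, $f$ attains its maximum at $\zeta=1$; a direct computation shows $f''(\zeta)\le 0$, so $f$ is concave on $[0,1]$ and therefore non-decreasing there, giving $f(\zeta)\ge f(0)$ for \emph{every} $\zeta\in[0,1]$. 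Unrolling this one-step inequality through the Bellman recursion yields $Q^{\pi_{\mathrm{new}}}\ge Q^{\pi}$ regardless of the mixing weight, so the interpolation never breaks monotone improvement and no summability or vanishing-$\zeta$ condition is needed. Monotonicity plus boundedness then gives convergence of the value sequence, the KL term vanishes at the fixed point, and the limit policy has the claimed Boltzmann form (with convergence of the interpolated iterates handled by the local-policy-search argument of Scherrer and Geist). Your characterization of the limit via the telescoping KL term is consistent with the paper's, but without the concavity-in-$\zeta$ lemma your argument does not go through as written.
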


\begin{proof}
See Section \ref{apdx:ER-conservative-policy-iteration} in the Appendix.
\end{proof}
From Theorem 1 we see the optimal policy and corresponding optimum of the MDP is biased by the choice of $\kappa$. If we gradually decay the value of $\kappa$ then we recover the optimum of the non-regularized MDP \cite{vieillard2020leverage}.
In the following sections, we describe in detail the CAC actor and critic, as well as the derivation of actor gradient expression and practical interpolation coefficient $\zeta$ design. 

\subsubsection{Conservative Actor}

As discussed in Section \ref{sec:trust_region}, at $k$-th iteration TRPO directly constructs a new policy $\pi$ by maximizing $J^{\text{TRPO}}_{\pi_{k}}(\pi)$. This is useful if the agent learns from scratch, but it discards the \emph{reference policy} when available. 
On the other hand, we follow the exact form of \citep{kakade-cpi} by taking the information of reference policy into account, where we choose $\pi_{k+1}$ to be the reference policy $\pi'$:
\begin{align}
    \tilde{\pi}_{k+1} = \zeta \pi_{k+1} + (1 - \zeta) \pi_{k}.
    \label{eq:interpolation}
\end{align}
Our objective function $J_{\pi_{k}, \pi_{k+1}}^{\text{CAC}}(\pi)$ explicitly features the knowledge of the reference policy \citep{pmlr-v28-pirotta13}. Specifically, the objective can be lower-bounded as:
\begin{align}
    \begin{split}    \label{eq:Jimproved2}
          J_{\pi_{k}, \pi_{k+1}}^{\text{CAC}}(\pi) &:= \avg{\pi_{k+1}, d^{\pi_{k+1}}}\adaRectBracket{A^{\pi_{k}}(s,a)} \\
            &\geq  C'_{\gamma} \,\,  (v\, \tilde{\Delta}^{\pi_{k+1}}_{\pi_{k}})^{-1} \adaBracket{\avg{\pi_{k+1}, d^{\pi_{k}}} \adaRectBracket{A^{\pi_{k}}(s,a)} }^{2}, \\
        \text{given } \zeta^{*} &=   2 \, C'_{\gamma} (v\, \tilde{\Delta}^{\pi_{k+1}}_{\pi_{k}})^{-1} \adaBracket{\avg{\pi_{k+1}, d^{\pi_{k}}} \adaRectBracket{A^{\pi_{k}}(s,a)} \!}, \\
        \tilde{\Delta}^{\pi_{k+1}}_{\pi_{k}} &= \max_{s,s'} \left|A^{\pi_{k+1}}_{\pi_{k}}(s) - A^{\pi_{k+1}}_{\pi_{k}}(s') \right|, \\
        v &= \max_{s} D_{TV}\adaBracket{\pi_{k+1}(\cdot|s) |\!| \pi_{k}(\cdot|s)},
        \vspace{-0.2cm}
    \end{split}
\end{align}
where $D_{TV}$ denotes the total variation. $v$, $\tilde{\Delta}^{\pi_{k+1}}_{\pi_{k}}$ and the expectation wrt $\pi_{k+1}, d^{\pi_{k}}$ require estimation. $C'_{\gamma}$ absorbs the horizon-dependent constants. 
Hence, when optimizing the lower bound of $J_{\pi_{k}, \pi'}^{\text{CAC}}(\pi)$, we can achieve guaranteed improvement.

In the existing literature \citep{kakade-cpi,pmlr-v28-pirotta13,abbasi-improvement16}, the difficulties of extending \EqNumber{eq:Jimproved2} to large-scale problems are: 
(1) preparing a reliable reference policy in high dimensional continuous state-action spaces is difficult; 
(2) it is hard to accurately estimate $v$, the maximum total variation between two policies without enforcing a gradual change of policies, which is absent in these works. 
On the other hand, naively using $v \leq 2$ as suggested by \citep{pmlr-v28-pirotta13} often yields vanishingly small $\zeta$, which significantly hinders learning. 
(3) the horizon-dependent constant $C_{\gamma}'$ developed in the classic ADP literature is not suitable for learning with deep networks that feature long horizon of learning. 
As will be demonstrated in the following sections, we tackle the first and second problems by leveraging entropy-regularized critic, and the third problem via a novel design of $\zeta$ inspired by a very recent work for discrete action problems \citep{Vieillard-2020DCPI}.

  

\subsubsection{Conservative Critic }  


In \EqNumber{eq:Jimproved2} we see in order to yield a meaningful interpolation coefficient $\zeta$ one is required to accurately estimate the maximum total derivation $v$, which is intractable in high dimensional continuous action spaces. 
However, by introducing an entropy-regularized critic, we can leverage the following theorem to avoid estimating $v$:
\begin{theorem}\citep[Proposition 3]{kozunoCVI}
\label{theorem:ck}
For any two consecutive entropy-regularized policies $\pi_{k}, \pi_{k+1}$ generated by \EqNumber{eq:cac}, the following bound for their maximum total deviation holds:
\begin{equation}
\begin{aligned}
     &\max_{s} D_{TV}\adaBracket{ \pi_{k+1} (\cdot | s)|\!| \pi_{k} (\cdot | s)} \leq  \sqrt{4 B_{K} + 2 C_{K}}, \\
     &\text{where }  B_{k}:=\frac{1-\gamma^{k}}{1-\gamma}\epsilon\beta, \quad C_{k} : = r_{\text{max}} \beta \sumJK{0}{k-1}\alpha^{j}\gamma^{k-j-1},
\end{aligned}\label{eq:ck}
\end{equation}
$\epsilon$ is the uniform upper bound of errors.
\end{theorem}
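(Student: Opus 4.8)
The plan is to follow the structure of the cited CVI analysis, specialised to the doubly regularized policies produced by \EqNumber{eq:cac}. The entry point is the closed form of the regularized maximizer, $\pi_{k+1}(a|s)\propto\pi_{k}^{\alpha}(a|s)\exp\!\left(\beta Q^{\pi_{k}}(s,a)\right)$. Taking logarithms and collecting the purely state-dependent normalizers (which cancel inside any policy, being independent of $a$), I would introduce a \emph{preference} function $\Psi_{k}$ via $\pi_{k}(a|s)\propto\exp\!\left(\beta\Psi_{k}(s,a)\right)$ and show it obeys the linear recursion $\Psi_{k+1}(s,a)=\alpha\,\Psi_{k}(s,a)+Q^{\pi_{k}}(s,a)$ modulo a state-only offset. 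Unrolling this recursion writes $\Psi_{k}$ as an $\alpha$-geometrically weighted average of the past action-value functions $Q^{\pi_{0}},\dots,Q^{\pi_{k-1}}$, the structural heart of the KL-regularized update.

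First I would express the consecutive-preference difference as $\Psi_{k+1}-\Psi_{k}=\alpha\,(\Psi_{k}-\Psi_{k-1})+(Q^{\pi_{k}}-Q^{\pi_{k-1}})$ and bound it in sup-norm. I would then split the right-hand side into two contributions. The \emph{error-free} part, driven by the reward propagation $|\reward(s,a)|\le r_{\text{max}}$ through the $\gamma$-discounted value backup nested inside the $\alpha$-averaging, produces the double geometric sum $\sum_{j=0}^{k-1}\alpha^{j}\gamma^{k-j-1}$ and, after the global $\beta$ scaling, the term $C_{K}$. The \emph{error} part collects the per-iteration evaluation errors, each bounded uniformly by $\epsilon$ and propagated with discount $\gamma$, yielding the single geometric sum $\frac{1-\gamma^{k}}{1-\gamma}$ and hence $B_{K}$. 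Combining the two gives control of $\beta\,\myNorm{\Psi_{k+1}-\Psi_{k}}$ in terms of $B_{K}$ and $C_{K}$.

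Finally I would convert this energy-function bound into a total-variation bound. Since $\pi_{k}$ and $\pi_{k+1}$ are Boltzmann distributions over $\Psi_{k}$ and $\Psi_{k+1}$, a uniform bound $\beta\,|\Psi_{k+1}(s,a)-\Psi_{k}(s,a)|\le\delta$ forces the log-partition ratio into $[-\delta,\delta]$ and hence the log-ratio $\log(\pi_{k+1}/\pi_{k})$ into $[-2\delta,2\delta]$, so that $D_{KL}\!\left(\pi_{k+1}\|\pi_{k}\right)$ is controlled by the same quantity; Pinsker's inequality $D_{TV}\le\sqrt{\tfrac12 D_{KL}}$ then produces the square-root form, and tracking the numerical constants through the split reproduces the claimed $\sqrt{4B_{K}+2C_{K}}$ after taking the maximum over $s$.

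I expect the main obstacle to be the bookkeeping of the \emph{double} geometric accumulation: the preference is an $\alpha$-weighted average of value functions, each of which is itself a $\gamma$-discounted sum of rewards, so isolating exactly the weighting $\sum_{j}\alpha^{j}\gamma^{k-j-1}$ for the reward term while keeping the error term a clean $\gamma$-geometric series requires careful reindexing of the nested sums. A secondary subtlety is that the passage from the preference bound to total variation must be tight enough to deliver the precise constants $4$ and $2$, and must remain valid when the discrete action sums are replaced by integrals over a continuous $\mathcal{A}$.
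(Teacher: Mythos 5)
The paper itself offers no proof of this statement: Theorem~\ref{theorem:ck} is imported verbatim from \citep[Proposition 3]{kozunoCVI} and used as a black box, so there is no in-paper argument to compare yours against. Judged on its own, your plan has the right architecture and matches the structure of the cited source: the preference recursion $\Psi_{k+1}=\alpha\Psi_{k}+Q_{k}$ (modulo state-only normalizers), a sup-norm bound on $\beta\myNorm{\Psi_{k+1}-\Psi_{k}}$ split into a reward-driven part and an error-driven part, a pointwise bound on $\log(\pi_{k+1}/\pi_{k})$ via the log-partition functions, and Pinsker's inequality (in its $\ell_{1}$ form, $\|p-q\|_{1}\le\sqrt{2D_{KL}}$, which is what delivers the constants $4$ and $2$). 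That is the correct route to a $\sqrt{\,\cdot\,}$-type bound on the total variation of consecutive Boltzmann policies.

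The one step that would fail as written is the middle of your second paragraph. You propose to bound $\myNorm{\Psi_{k+1}-\Psi_{k}}$ from the recursion $\Psi_{k+1}-\Psi_{k}=\alpha(\Psi_{k}-\Psi_{k-1})+(Q^{\pi_{k}}-Q^{\pi_{k-1}})$, but $\myNorm{Q^{\pi_{k}}-Q^{\pi_{k-1}}}$ is itself controlled by the difference of the policies $\pi_{k}$ and $\pi_{k-1}$ --- exactly the quantity the theorem is bounding --- so bounding it either is circular or forces the crude estimate $2V_{\max}$, which destroys the geometric decay in $C_{k}$. The derivation in \citep{kozunoCVI} avoids this by writing the recursion in terms of the approximate Bellman backup of the preferences themselves, $\Psi_{k+1}=r+\gamma\mathbf{P}\,m_{\beta}\Psi_{k}+\alpha\Psi_{k}+\epsilon_{k+1}$ with $m_{\beta}$ the log-sum-exp (Boltzmann-average) operator, and then uses the non-expansiveness of $m_{\beta}$ in sup-norm to contract $\gamma\mathbf{P}(m_{\beta}\Psi_{k}-m_{\beta}\Psi_{k-1})$; this is also what makes the errors enter as a plain $\gamma$-geometric sum (giving $B_{k}=\frac{1-\gamma^{k}}{1-\gamma}\epsilon\beta$ with no $\alpha$-weighting) while the reward term picks up the double convolution $\sum_{j=0}^{k-1}\alpha^{j}\gamma^{k-j-1}$. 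If you replace your $Q^{\pi_{k}}-Q^{\pi_{k-1}}$ step with that operator-level recursion and its non-expansiveness, the rest of your outline goes through and reproduces \EqNumber{eq:ck}.
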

Recall from Section \ref{sec:deep_cac} that $\alpha:=\frac{\kappa}{\kappa + \tau}, \beta:=\frac{1}{\kappa + \tau}$.
Specifically, it has been proved in \citep{kozunoCVI} that this bound is non-improvable, i.e. there exists an MDP such that the inequality becomes equality.

By leveraging an entropy-regularized critic, the objective in \EqNumber{eq:Jimproved2} becomes:
\begin{align}
    \begin{split}    \label{eq:cac_implement}
          J_{\pi_{k}, \pi_{k+1}}^{\text{CAC}}(\pi) &\geq  C'_{\gamma}C_{k} \,\,  ( \tilde{\Delta}^{\pi_{k+1}}_{\pi_{k}})^{-1} \adaBracket{\avg{\pi_{k+1}, d^{\pi_{k}}} \adaRectBracket{A^{\pi_{k}}(s,a)} }^{2}, \\
        \text{given } \zeta^{*} &=   2 \, C'_{\gamma}C_{k} ( \tilde{\Delta}^{\pi_{k+1}}_{\pi_{k}})^{-1} \adaBracket{\avg{\pi_{k+1}, d^{\pi_{k}}} \adaRectBracket{A^{\pi_{k}}(s,a)} \!}, 
    \end{split}
\end{align}
where $C'_{\gamma}$ absorbs horizon-dependent constants and $C_{k}$ is from Theorem \ref{theorem:ck}.
Estimating the optimal $\zeta^{*}$ now requires estimating the expectation wrt $\pi_{k+1}, d^{\pi_{k}}$ and $\tilde{\Delta}^{\pi_{k+1}}_{\pi_{k}}$ which have been studied by \citep{Vieillard-2020DCPI}.

The KL divergence also manifests its importance for generating reasonable reference policies even for high dimensional or continuous action problems. 
Consider the following upper bound due to \citep{vieillard2020leverage} where reward is augmented by the KL divergence: 
\begin{align*}
        \myNorm{Q^{*} - Q^{\pi_{k+1}}} \leq \frac{2}{1-\gamma}\myNorm{\frac{1}{k}\sum_{j=0}^{k}\epsilon_{j}} \!\!+ \frac{4}{1-\gamma} \frac{V_{max}}{k},
\end{align*}
where $\epsilon_{j}$ are errors and $V_{max} \!=\! \frac{r_{\text{max}}}{1-\gamma}$. By comparing it with the non-improvable approximate modified policy iteration (AMPI) bound where the reward is not augmented \citep{scherrer15-AMPI}:
\begin{align*}
    \myNorm{Q^{*} - Q^{\pi_{k+1}}} \leq \adaBracket{\! (1-\gamma) \sumJ{1}\myNorm{\epsilon_{j}} \!} + \frac{2\gamma^{k+1}}{1-\gamma}V_{max},
\end{align*}
we see that the error term for the KL regularization case is sup-over-sum. Under mild assumptions such as $\epsilon_{j}$  are iid distributed under the natural filtration \citep{azar2012dynamic}, the summation over errors asymptotically cancels out. 
On the other hand, the error term for AMPI depends on the summation of \emph{maximum} of every iteration, which is typically large. 
Further, the dependence of error on the horizon is linear $\frac{1}{1-\gamma}$ rather than quadratic, which is a significant improvement as typically $\gamma \approx 1$.


\subsubsection{Network Optimization Perspective } 

Given the above ADP-style characterization for both the actor and the critic, we now examine \EqNumber{eq:cac} from the optimization perspective. 
Suppose the critic is parametrized by a network with parameters $\theta$ and the actor by a network with parameters $\phi$.
CAC updates the network weights $\theta, \phi$ by solving the following minimization problems:
\begin{subequations}
\begin{align}
y &= r+\gamma \left(\avg{\av \sim \pi_\phi}\left[Q_{\bar{\theta}} (\svp, \av)\right] + \entreg{\pol_{\bar{\pparams}}}{\pol_{\pparams}}(\svp)\right), \label{eq:y_target}\\
\vparams &\leftarrow \arg \min \avg{\rbuf}\left[\left({Q}_{\theta}\left(\sv , \av \right) - y\right)^{2}\right], \label{eq:theta}\\
{\pparams}&\leftarrow \arg \min \avg{\rbuf}\left[{D}_{{KL}}\left(\pi_{\phi} \|
(1-\zeta)\pol_{\bar{\pparams}} + \zeta \gr_{\pol_{\bar{\pparams}}}Q_{\vparams} \right)\right] \label{eq:phi},
\end{align}
\end{subequations}
where the update of $\phi$ corresponds to solving an \emph{information projection} problem.
This is because policies $\pi_{k+1}, \pi_{k}$ are Boltzmann softmax \citep{geist19-regularized} but their summation is generally not Boltzmann, which might results in loss of desirable properties. 
By the following theorem, in the ideal case we can find a Boltzmann policy $\pi_{\phi}$ that perfectly represents the interpolation by solving the information projection problem. 
\begin{theorem}\cite[Theorem 2.8]{ziebart2010-phd}\label{theorem:mixture}
Let $\pi^{(1)},$  $\pi^{(2)},$ $\dots, \pi^{(n)}$  be an arbitrary sequence of policies and $\zeta_{1}, \dots \zeta_{n}$ be a sequence of numbers such that $\zeta_{i} \geq 0, \forall i$, $\sum_{i=1}^{n} \zeta_{i} = 1$. Then the policy $\pi'$ defined by:
 \begin{align}\label{eq:mixture_policy}
     \pi'(a|s) := \frac{ \sum_{i=1}^{n} \zeta_{i} \, P(\mathcal{S} = s, \mathcal{A} = a | \pi^{(i)}) }{\sum_{i=1}^{n} \zeta_{i} \, P(\mathcal{S} = s | \pi^{(i)})}
 \end{align}
 has same expected number of state-action occurrences when the denominator is nonzero.
 \end{theorem}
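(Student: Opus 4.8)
The plan is to prove the claim by identifying ``expected number of state-action occurrences'' with the (discounted) occupancy measure and then exploiting its characterization through the Bellman flow equations. Write $\rho_i(s,a) := P(\mathcal{S}=s,\mathcal{A}=a\mid\pi^{(i)})$ and $\rho_i(s):=P(\mathcal{S}=s\mid\pi^{(i)})=\sum_a \rho_i(s,a)$ for the state-action and state occupancies of each $\pi^{(i)}$, and set the convex combination $\bar\rho(s,a):=\sum_{i=1}^n \zeta_i\,\rho_i(s,a)$ with marginal $\bar\rho(s):=\sum_a\bar\rho(s,a)=\sum_i\zeta_i\rho_i(s)$. With this notation the definition of $\pi'$ in \EqNumber{eq:mixture_policy} reads exactly $\pi'(a\mid s)=\bar\rho(s,a)/\bar\rho(s)$ wherever $\bar\rho(s)>0$. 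Hence the statement ``$\pi'$ has the same expected number of state-action occurrences'' is equivalent to the single identity $d^{\pi'}(s,a)=\bar\rho(s,a)$ for all $(s,a)$, and the whole proof reduces to establishing this equality.

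First I would recall the standard bijection between stationary policies and valid occupancy measures: a nonnegative function $\rho(s,a)$ is the occupancy of some stationary policy started from $\mu$ if and only if it satisfies the linear flow constraint
\begin{equation}
\sum_a \rho(s,a) \;=\; \mu(s) + \gamma \sum_{s',a'} P(s\mid s',a')\,\rho(s',a'),
\label{eq:flow}
\end{equation}
in which case the inducing policy is $\rho(s,\cdot)$ normalized over actions. Each $\rho_i$ satisfies (\ref{eq:flow}) by construction. The crux is then a one-line observation: because (\ref{eq:flow}) is \emph{linear} in $\rho$ and $\sum_i\zeta_i=1$ (so the weighted $\mu(s)$ terms collapse to a single $\mu(s)$), forming the $\zeta$-weighted sum of the constraints for the $\rho_i$ shows that $\bar\rho$ again satisfies (\ref{eq:flow}). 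Therefore $\bar\rho$ is itself a valid occupancy measure, and by uniqueness in the bijection it is the occupancy of the stationary policy obtained by normalizing $\bar\rho(s,\cdot)$ --- which is precisely $\pi'$. This yields $d^{\pi'}(s,a)=\bar\rho(s,a)$ and closes the argument.

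I expect the genuine obstacle to be conceptual rather than computational: one must be careful that $\pi'$ is asserted to match the mixture only at the level of occupancy (expected counts), not as a trajectory distribution --- executing the single stationary policy $\pi'$ is generally \emph{not} the same stochastic process as first sampling an index $i\sim\zeta$ and then running $\pi^{(i)}$, yet the two share the flow-constrained occupancy, which is all that \EqNumber{eq:mixture_policy} claims. The remaining care is with the degenerate states where $\bar\rho(s)=0$: there the denominator vanishes, $\pi'(\cdot\mid s)$ may be defined arbitrarily, and one checks that such states contribute zero to both sides of (\ref{eq:flow}), so they do not affect the occupancy identity --- matching the hypothesis ``when the denominator is nonzero.'' Finally I would note the two routine verifications needed to invoke the bijection rigorously: that $\bar\rho\ge0$ (immediate from $\zeta_i\ge0$) and that the normalized $\bar\rho(s,\cdot)$ is a valid conditional distribution (immediate from $\bar\rho(s)=\sum_a\bar\rho(s,a)$).
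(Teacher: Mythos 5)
The paper does not actually prove this statement---it is imported verbatim by citation to Ziebart's thesis (Theorem 2.8), so there is no in-paper proof to compare against. Your argument is correct and is essentially the standard one (and, up to notation, the one in the cited source): identify ``expected state-action occurrences'' with the occupancy measure, observe that the Bellman flow constraint is affine in $\rho$ with the initial-distribution terms collapsing because $\sum_i\zeta_i=1$, conclude that $\bar\rho=\sum_i\zeta_i\rho_i$ is flow-feasible, and invoke the bijection between flow-feasible measures and stationary policies to get $d^{\pi'}=\bar\rho$. Two points of rigor are deferred rather than established: (i) the bijection lemma itself (that normalizing a flow-feasible $\rho$ over actions yields a policy whose occupancy is exactly $\rho$, typically proved via invertibility of $I-\gamma P_{\pi}$) carries essentially all the mathematical content and is used as a black box---acceptable since it is standard, but worth citing explicitly; and (ii) the claim that states with $\bar\rho(s)=0$ contribute nothing needs the short induction showing that every state reachable under $\pi'$ from $\mu$ has $\bar\rho(s)>0$, so that the arbitrary definition of $\pi'$ on the null set cannot create new occupancy. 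Your remark that $\pi'$ matches the mixture only at the level of occupancy, not as a trajectory distribution, is exactly the right caveat and is what makes the theorem nontrivial.
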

 \vspace{-0.2cm}
 In implementation as the states are sampled from the replay buffer, there is an error term in this information projection step.
Taking the above information projection into account,  we elaborate upon gradient expression of the actor via the following proposition:

\begin{proposition}\label{thm:gradient}
Let the actor network be parametrized by weights $\phi$ and critic by $\theta$.
Define $\mathcal{G}Q_{\bar{\phi},\theta}$ as the greedy policy with respect to the CAC critic.
The subscript $\bar{\phi}$ comes from the baseline policy introduced by KL divergence.
Then the gradient of the actor update can be expressed as:
\begin{align}
\begin{split}
&\nabla_{\pparams} \avg{\substack{\sv\sim\rbuf\\\av\sim\pol_{\pparams}}}
\left[ D^{\phi}_{\bar{\phi}}
-\frac{\beta}{1+\const}Q_\theta(\sv, \av)
\right], \\
&\text{where } D^{\phi}_{\bar{\phi}} =  \log \pi_{\pparams}\left(\av  \mid \sv \right)
-\frac{\alpha+\const}{1+\const}\log\pi_{\bar{\pparams}}(\av; \sv ) \\
&\const = \frac{1-\zeta}{\zeta}\cdot
\frac{\pi_{\bar{\phi}}(\av\mid \sv)}{\gr Q_{\bar{\phi}, \theta}(\av \mid \sv)} .
\end{split}\label{eq:cac_sgd_grpi}
\end{align}

\end{proposition}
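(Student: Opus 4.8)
The plan is to begin from the information-projection objective \Eqref{eq:phi} and rewrite the per-state KL term $D_{KL}\adaBracket{\pi_{\pparams}\,\|\,p}$, where $p:=(1-\zeta)\pi_{\bar{\pparams}}+\zeta\,\gr Q_{\bar{\pparams},\vparams}$, as $\avg{\av\sim\pi_{\pparams}}\adaRectBracket{\log\pi_{\pparams}(\av\mid\sv)-\log p(\av\mid\sv)}$, and then to reduce $\nabla_{\pparams}$ of this quantity to the surrogate in \Eqref{eq:cac_sgd_grpi}. The first ingredient is the closed form of the regularized greedy policy given by Fenchel conjugacy \citep{geist19-regularized,kozunoCVI}, $\gr Q_{\bar{\pparams},\vparams}(\av\mid\sv)=\pi_{\bar{\pparams}}^{\alpha}(\av\mid\sv)\exp\adaBracket{\beta Q_{\vparams}(\sv,\av)}/\partition$ with $\alpha=\frac{\kappa}{\kappa+\tau}$, $\beta=\frac{1}{\kappa+\tau}$, so that $\log\gr Q_{\bar{\pparams},\vparams}=\alpha\log\pi_{\bar{\pparams}}+\beta Q_{\vparams}-\log\partition$.

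Next I would factor the mixture so that $\const$ appears by itself: pulling out $\zeta\,\gr Q_{\bar{\pparams},\vparams}$ gives $p=\zeta\,\gr Q_{\bar{\pparams},\vparams}\adaBracket{1+\const}$ with $\const=\frac{1-\zeta}{\zeta}\cdot\frac{\pi_{\bar{\pparams}}(\av\mid\sv)}{\gr Q_{\bar{\pparams},\vparams}(\av\mid\sv)}$, precisely the quantity defined in the statement. Therefore $\log p=\log\zeta+\log\gr Q_{\bar{\pparams},\vparams}+\log(1+\const)$. The summands $\log\zeta$ and $-\log\partition$ depend on $\sv$ but not on $\av$; exactly as the $\log Z$ term is discarded in the SAC derivation, they vanish under $\nabla_{\pparams}$ and can be dropped. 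What remains is to show how $\log(1+\const)$ combines with $\log\gr Q_{\bar{\pparams},\vparams}$ to produce the effective coefficients.

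The crux is to differentiate through the reparametrized action $\av=f_{\pparams}(\nv,\sv)$, so that the $-\log p$ term enters the gradient only through $\nabla_{\av}\log p\cdot\nabla_{\pparams}f_{\pparams}$ while $\log\pi_{\pparams}$ is kept verbatim (it already carries both its explicit-$\pparams$ and action-path gradients). From $\log\const=\mathrm{const}(\sv)+(1-\alpha)\log\pi_{\bar{\pparams}}-\beta Q_{\vparams}$ one obtains $\nabla_{\av}\log(1+\const)=\frac{\const}{1+\const}\adaBracket{(1-\alpha)\nabla_{\av}\log\pi_{\bar{\pparams}}-\beta\nabla_{\av}Q_{\vparams}}$; adding $\nabla_{\av}\log\gr Q_{\bar{\pparams},\vparams}=\alpha\nabla_{\av}\log\pi_{\bar{\pparams}}+\beta\nabla_{\av}Q_{\vparams}$ and collecting the two scalar multipliers gives $\nabla_{\av}\log p=\frac{\alpha+\const}{1+\const}\nabla_{\av}\log\pi_{\bar{\pparams}}+\frac{\beta}{1+\const}\nabla_{\av}Q_{\vparams}$. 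This is exactly the coefficient pattern of $D^{\pparams}_{\bar{\pparams}}$ and of the $Q_{\vparams}$ term, so $\nabla_{\pparams}$ of the true KL objective equals $\nabla_{\pparams}$ of the surrogate integrand $\log\pi_{\pparams}-\frac{\alpha+\const}{1+\const}\log\pi_{\bar{\pparams}}-\frac{\beta}{1+\const}Q_{\vparams}$ with $\const$ treated as a constant; re-inserting $\avg{\rbuf}$ over states yields \Eqref{eq:cac_sgd_grpi}.

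I expect the main obstacle to be the bookkeeping around $\log(1+\const)$. The clean multipliers $\frac{\alpha+\const}{1+\const}$ and $\frac{\beta}{1+\const}$ do \emph{not} come from any function-value identity for $\log p$ — such an identity fails because of the nonlinearity of $\log(1+\const)$ — but only at the level of the action-gradient. The delicate step is thus to justify that the surrogate of \Eqref{eq:cac_sgd_grpi}, in which $\const$ is held fixed under differentiation, reproduces the true gradient: one must verify that the extra contributions generated by differentiating the multipliers are exactly those already contained in $\nabla_{\av}\log(1+\const)$, so that nothing is double counted or lost. A secondary point is that the information projection is exact only in the idealized setting of Theorem~\ref{theorem:mixture}; with states drawn from $\rbuf$ there is the residual projection error already noted after \Eqref{eq:mixture_policy}, which I would carry as an additive error term rather than attempt to bound.
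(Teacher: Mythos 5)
Your proposal is correct and follows essentially the same route as the paper: both start from the information-projection objective \Eqref{eq:phi}, apply the reparameterization trick so that $-\log p$ enters only through $\nabla_{\av}\log p\cdot\nabla_{\pparams}f_{\pparams}$, and compute $\nabla_{\av}\log\adaBracket{(1-\zeta)\pi_{\bar{\pparams}}+\zeta\gr Q_{\bar{\pparams},\vparams}}$ to obtain the coefficients $\frac{\alpha+\const}{1+\const}$ and $\frac{\beta}{1+\const}$. The only difference is cosmetic — the paper organizes the mixture as $\log(e^{C_1}+e^{C_2})$ and uses the softmax-gradient identity with $D=e^{C_1-C_2}=\const$, whereas you factor $p=\zeta\,\gr Q_{\bar{\pparams},\vparams}(1+\const)$ and differentiate $\log(1+\const)$ separately; the two computations are algebraically identical, and your closing remark that the displayed surrogate reproduces the true gradient only when $\const$ is held fixed under differentiation is a correct reading of the paper's (implicit) stop-gradient convention.
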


\begin{proof}
 See Section \ref{apdx:cac-gradient} in the Appendix.
\end{proof}

\vspace{-0.35cm}

This gradient expression is similar to SAC \citep{pmlr-v80-haarnoja18b} which is off-policy since states $s$ are sampled from the off-policy replay buffer $\rbuf$. However, CAC has the term $\log\pi_{\bar{\pparams}}(\av; \sv )$ from the KL regularization. The term $\const$ in both the $Q_{\theta}$ and $\log\pi_{\bar{\pparams}}(\av; \sv )$ involves $\zeta$ that encodes the information for guiding the gradient to \emph{cautiously} learn. 

\subsection{Design of Interpolation Coefficient}\label{sec:zeta_design}

One of the main difficulties to extending CPI to learning with deep networks is that $\zeta$ becomes vanishingly small due to the typically long horizon in the AC setting. To tackle this problem, \citep{Vieillard-2020DCPI} propose to heuristically design $\zeta$ to be a non-trivial value, which features the consideration of \emph{moving averages}.

\begin{figure*}[t]
    \vskip 0.2in
    \begin{center}
    \centerline{\includegraphics[width=1.0\columnwidth]{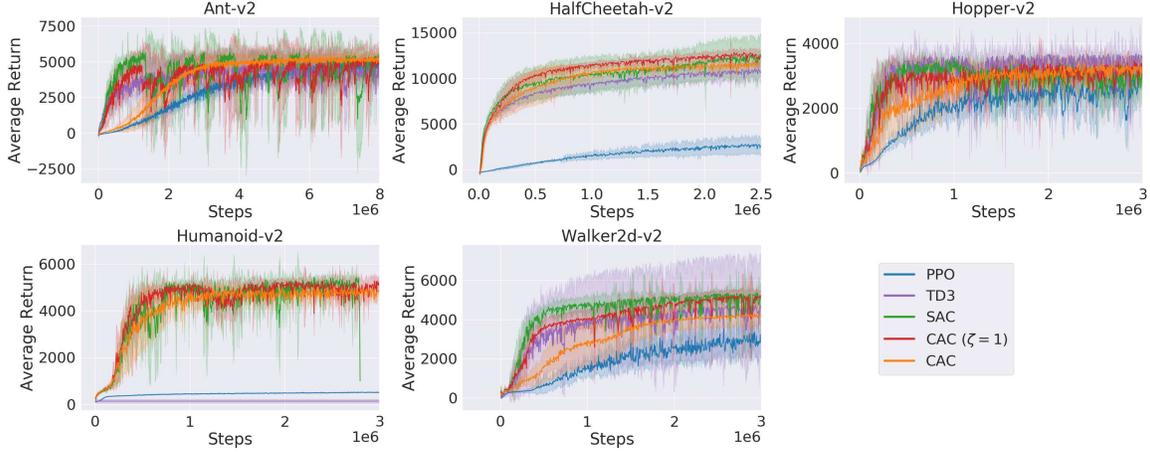}}
    \caption{Training curves on the continuous control benchmark problems. 
    The solid curves show the mean and shaded regions the standard deviation over the five independent trials.
    In all tasks CAC achieved comparable performance but significantly stabilized learning. 
    }
    \label{fig:mujoco-return}
    \end{center}
    \vskip -0.2in
    \end{figure*}

Recall that in \EqNumber{eq:cac_implement}, $\zeta$ is computed by a function of the form: 
\begin{align*}
    \zeta = C'_{\gamma}C_{k} \,\, \frac{\avg{\pi_{k+1}, d^{\pi_{k}}} \adaRectBracket{A^{\pi_{k}}(s,a)} }{ \tilde{\Delta}^{\pi_{k+1}}_{\pi_{k}} } ,
\end{align*}
where $C'_{\gamma}$ absorbs horizon-dependent constants and $C_{k}$ is defined in \EqNumber{eq:ck}. We propose to remove $C'_{\gamma}$ since it tends to zero as the horizon increases. Recall also that $\tilde{\Delta}^{\pi_{k+1}}_{\pi_{k}} $ is the maximum difference of the expected advantage function defined in \EqNumber{eq:Jimproved2}. We propose the following novel $\zeta$ design:
\begin{equation} \label{eq:ERCAC-zeta}
\zeta^{\mathrm{CAC}}=\clip{\left(\frac{\approxadv}{|\approxadv^{\mathrm{MaxDiff}}|}, 0, \,\, 1\right)},
\end{equation}
where by following the moving average concept of \citep{Vieillard-2020DCPI}, we update $\approxadv$ and $\approxadv^{\mathrm{MaxDiff}}$ as:
\begin{equation}
\begin{array}{l} \label{eq:ERCAC-zeta2}
M = \avg{\substack{\sv\sim\onrbuf \\ \av\sim\gr Q^{\pi_k}}}\left[A^{\pi_k}(\sv, \av)\right] \\
\approxadv \leftarrow 
\begin{cases}
    c, & \text{if } M \leq 0\\
    \left(1-\lA\right) \approxadv + \lA M, & \text{else}
\end{cases} \\
\approxadv^{\mathrm{MaxDiff}} \leftarrow \left(1-\lAslow\right) \approxadv^{\mathrm{MaxDiff}}+ \lAslow \, M. 
\end{array}
\end{equation}
Here, $M$ is the current estimate of $\avg{\pi_{k+1}, d^{\pi_{k}}} \adaRectBracket{A^{\pi_{k}}(s,a)}$.
We propose to set an if-else judgement here, as $M < 0$ indicates the updated policy has worse performance than the current policy, we let $\approxadv$ be a negative value $c$, hence enforcing $\zeta = 0$.
This information is incorporated into $\approxadv$ by exponential moving average with the previous estimates. $\approxadv^\mathrm{MaxDiff}$ attempts to approximate the maximum difference $\tilde{\Delta}^{\pi_{k+1}}_{\pi_{k}}$.
$\onrbuf$ is a FIFO replay buffer storing $K$ on-policy  samples, and $\lA, \lAslow \in [0,1]$ are the hyperparameters controlling the average.
Following \citep{Vieillard-2020DCPI}, it is beneficial to have $\lAslow \leq \lA$ for smooth learning. 

Computing $\zeta$ in \EqNumber{eq:ERCAC-zeta} using the moving average in \EqNumber{eq:ERCAC-zeta2} is in spirit similar to \citep{Vieillard-2020DCPI}. However, they focus on general stationary policies. As $\approxadv$ is an off-policy estimate of the on-policy term $\avg{\pi_{k+1}, d^{\pi_{k}}} \adaRectBracket{A^{\pi_{k}}(s,a)}$, it might corrupt the improvement guarantee. On the other hand, we focus on entropy-regularized policies which allow one to bound the performance loss of leveraging off-policy estimate $\approxadv$ \citep[Theorem 3]{zhu2020ensuring}.

\section{Experiments}\label{sec:experiment}

\begin{table*}[t]
\caption{The performance oscillation values of all algorithms for all environments. 
The bold numbers indicate the smallest performance oscillation values. 
$\times$ indicates the algorithm failed to learn meaningful behaviors. 
CAC recorded the smallest performance oscillation values for all the environments.
PPO is the only on-policy algorithm in the comparison.
}
\label{tb:oscillation}
\vskip 0.15in
\begin{center}
\begin{sc}
    \scalebox{0.925}{
\begin{tabular}{l|ccccc|ccccc}
\toprule
\textbf{}      & \multicolumn{5}{c}{\textbf{$\|\mathcal{O} J\|_{\infty}$}}    & \multicolumn{5}{c}{$\|\mathcal{O} J\|_{2}$} \\
\textbf{}      & \textbf{PPO} & \textbf{TD3} & \textbf{SAC} & \multicolumn{1}{c}{\begin{tabular}[c]{@{}c@{}}\textbf{CAC}\\ ($\zeta=1$)\end{tabular}} & \multicolumn{1}{c|}{\begin{tabular}[c]{@{}c@{}}\textbf{CAC}\\ \end{tabular}} & \textbf{PPO} & \textbf{TD3} & \textbf{SAC} & \multicolumn{1}{c}{\begin{tabular}[c]{@{}c@{}}\textbf{CAC}\\ ($\zeta=1$)\end{tabular}} & \multicolumn{1}{c}{\begin{tabular}[c]{@{}c@{}}\textbf{CAC}\\ \end{tabular}} \\
\midrule
Ant         & 1979     & 4979       & 7793       & 7160        & \textbf{1811}  & 359    & 510        & 642        & 591         & \textbf{297}          \\
HalfCheetah & $\times$ & 2337       & 3717       & 4200        & \textbf{1870}  & $\times$ & 331        & 425        & 397         & \textbf{286}          \\
Hopper      & \textbf{1598} & 3515  & 2598       & 2944        & 1944           & 318    & 609        & 454        & 394         & \textbf{279}          \\
Humanoid    & $\times$ & $\times$   & 4115       & 3092        & \textbf{2199}  & $\times$ & $\times$     & 645        & 436         & \textbf{313}          \\
Walker2d    & 1673     & 3729       & 4577       & 4310        & \textbf{1345}  & 330    & 461        & 499        & 334         & \textbf{183}          \\
\bottomrule
\end{tabular}
    }
\end{sc}
\end{center}
\end{table*}

As CAC combines concepts from ADP literature such as KL regularization and conservative learning that have not seen applications in AC, it is interesting to examine the combination against existing AC methods in challenging tasks. 
We choose a set of high dimensional continuous control tasks from the OpenAI gym benchmark suite \citep{brockman2016openai}.

For comparison, we compare CAC with twin delayed deep deterministic policy gradient (TD3) \citep{pmlr-v80-fujimoto18a} that comprehensively surveys the factors causing poor performance of actor-critic methods, to examine the cautious learning mechanism. As CAC is based on the CPI that has also inspired TRPO and PPO, we compare it with PPO which is improved over TRPO \citep{schulman1707proximal}. As PPO does not involve computing $\zeta$, we include the curves when $\zeta=1$.
We also compare with SAC \citep{pmlr-v80-haarnoja18b} which has similar architecture. 


To better illustrate and quantify the \emph{stability} during learning, we follow \citep{zhu2020ensuring} to define the measure of performance oscillation: 
\begin{equation}
\begin{array}{l}
\forall k, \text { such that } R_{k+1}-R_{k}<0 \\
\|\mathcal{O} J\|_{\infty}=\max _{k}\left|R_{k+1}-R_{k}\right|, \\
\|\mathcal{O} J\|_{2}=\sqrt{\frac{1}{N}\sum_{k=1}^{N}\left(R_{k+1}-R_{k}\right)^{2}},
\end{array}
\end{equation}
where $N$ is the steps of the learning and $R_k$ refers to the cumulative reward reported at $k$-th evaluation.
Intuitively, $\|\mathcal{O} J\|_{\infty}$ and $\|\mathcal{O} J\|_{2}$ measure the maximum and average degradation during learning, respectively.

\subsection{Comparative Evaluation}\label{sec:comparative}

We run all algorithms with the same set of hyperparameters listed in Section \ref{apdx:hypers} of the Appendix.
All figures are plotted with statistics from 10 different random seeds, with each performing 10 evaluation rollouts every 5000 environment steps.

\Figref{fig:mujoco-return} shows the learning curves of the algorithms. CAC achieved comparable performance with other AC algorithms while significantly stabilized learning curves.
PPO's learning speed was the slowest among all algorithms on all environments, due to the on-policy nature of PPO which is sample-inefficient. 
Other methods were able to leverage off-policy samples to quickly learn meaningful behaviors. However, the fast learning came at a cost: except the relatively simple \texttt{HalfCheetah-v2}, on all environments these off-policy algorithms oscillated wildly, especially on the challenging \texttt{Humanoid-v2} where both PPO and TD3 failed to learn any meaningful behaviors, and the performance of SAC, CAC with $\zeta=1$ degraded frequently.
On the other hand, CAC traded off a little bit slower learning for stability, exhibiting smooth curves.
Indeed, the convergence rate of CAC is $O\!\adaBracket{e^{-(1-\gamma)\sum_{j=1}^{k}\zeta_{j}\!}}$ \citep{Vieillard-2020DCPI}, which emphasizes stability more as $\zeta \rightarrow 0$.

The comparison on stability of the algorithms can be seen from the Table \ref{tb:oscillation} that summarized the values of $\|\mathcal{O} J\|_{\infty}$ and $\|\mathcal{O} J\|_{2}$ for all algorithms.
It provided empirical support as CAC showed least oscillation during learning. This is in contrast to other off-policy algorithms oscillated wildly during learning.
Since CAC with $\zeta=1$ still showed huge oscillation, it can be concluded that the mixture coefficient introduced in CAC is effective in preventing significant policy degradation.

\subsection{Ablation study on mixture coefficient}


\begin{figure}[t]
    \centering
    \includegraphics[width=0.575\columnwidth]{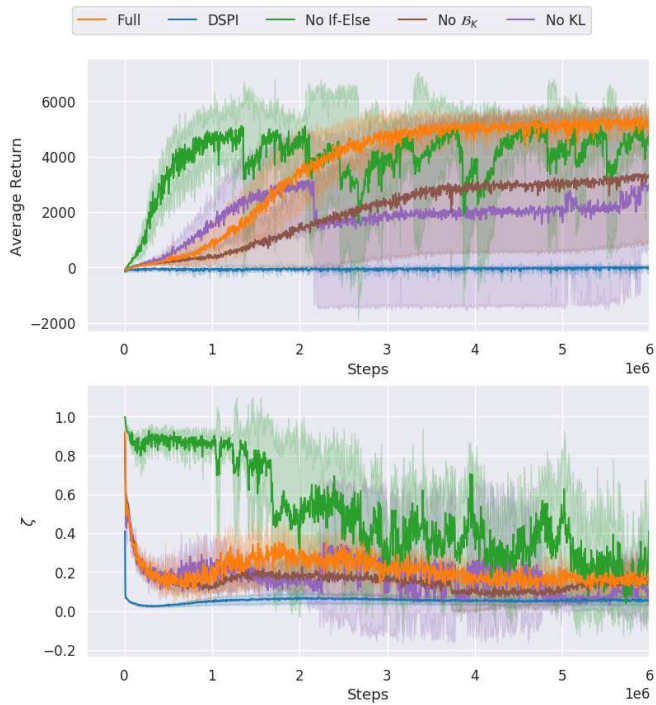}
    \caption{\textbf{Top:} The learning curves of derived methods of CAC on Ant-v2. \textbf{Bottom:} The value of the mixture coefficient $\zeta$.}
    \label{fig:ant}
\end{figure}

In this section we conduct an ablation test to study the effectiveness of CAC as well as the proposed $\zeta$ design in Section \ref{sec:zeta_design}.
 We compare the following setup:
\begin{enumerate}
    \item \textbf{Full. } This is CAC with the proposed $\zeta^{\text{CAC}}$ in \EqNumber{eq:ERCAC-zeta2}. The curve is same as in \Figref{fig:mujoco-return}.
    \item \textbf{No KL. } We remove KL regularization from \EqNumber{eq:cac}. This corresponds to  SAC with the cautious actor.
    \item \textbf{DSPI. } This corresponds to \emph{Deep Safe Policy Iteration} \citep{zhu2020ensuring} that uses the $\zeta$ suggested by \citep{Vieillard-2020DCPI}.
    \item \textbf{No } $\onrbuf$. We replace the on-policy replay buffer $\onrbuf$ with off-policy $\rbuf$ as suggested by \citep{Vieillard-2020DCPI}. 
    \item \textbf{No If-Else. } This corresponds to removing the \texttt{if} term in \EqNumber{eq:ERCAC-zeta2} and learns with only the \texttt{else} condition.
\end{enumerate}

As is obvious from \Figref{fig:ant}, while removing the \texttt{if-else} judgement accelerated learning, it ignored the warning from $M < 0$ that the updated policy was poorer. The consequent curve oscillated drastically as the result of aggressive $\zeta$ in the bottom image. 

Removing KL regularization induced learning curve similar with \textbf{Full} in the beginning, but the performance degraded significantly since the middle stage and failed to recover. This is probably due to the policies were corrupted by error. 

Using off-policy replay buffer $\rbuf$ demonstrated stable learning. This is expected as the agent was forced to learn cautiously by the CAC mechanism. On the other hand, the learning was slow as off-policy samples were not as informative as on-policy ones, as we observed small $\zeta$ values in the bottom figure.

It is most interesting to examine the DSPI case where $\zeta$ was set according to the suggestion of \citep{Vieillard-2020DCPI}. Though this scheme works well in Atari games, the resulting algorithm failed to learn any meaningful behaviors in the challenging control tasks with continuous action spaces. This is because they estimate with the entire off-policy replay buffer $\rbuf$ which tends to produce very large estimate of $\tilde{\Delta}^{\pi_{k+1}}_{\pi_{k}}$, leading to vanishingly small $\zeta^{\text{DSPI}}$ and subsequent poor performance.

\section{Conclusion}\label{sec:conclusion}

We have presented CAC, a novel actor-critic algorithm by introducing several concepts from the approximate dynamic programming literature.
The cautiousness in CAC consists in the doubly conservativeness: the actor follows conservative policy iteration \citep{kakade-cpi} that ensures monotonic improvement and the critic exploits conservative value iteration \citep{kozunoCVI,vieillard2020leverage} that has been shown to yield state-of-the-art guarantees in ADP literature. 

Our key observation was by introducing an entropy-regularized critic the unwieldy interpolated actor update can be simplified significantly while still ensuring robust policy improvement.
CAC performed comparable to the state-of-the-art AC methods while significantly stabilized learning on the benchmark control problems with high dimensional continuous state-action spaces.


An interesting future direction is to incorporate other entropy for different purposes. For example, $\alpha$-divergence could be used to achieve sparse optimal policies.

\acks{
This work is partly supported by JSPS KAKENHI Grant Number 21J15633 and 21H03522. 

}

\bibliography{refs}

\clearpage

\appendix

\section{Appendix}\label{sec:appendix}

In this appendix we provide missing proofs and implementation details.
Specifically, we present Theorem 1 for CAC convergenc in Section \ref{apdx:ER-conservative-policy-iteration}, Proposition 4 for calculating CAC actor gradient expression in Section \ref{apdx:cac-gradient} and implementation details in Section \ref{apdx:impl}.

\subsection{Proof for the CAC convergence}\label{apdx:ER-conservative-policy-iteration}

We prove the convergence of CAC using policy iteration style argument.
Similar proofs have also been used in \cite[Theorem 4]{pmlr-v80-haarnoja18b}.
The following lemmas establish the convergence of the policy evaluation and policy improvement of CAC, respectively.

\begin{lemma}[CAC Policy Evaluation]
Given the current policy $\pi$ and a baseline policy $\basepol$, the policy evaluation step of CAC is formulated as:
\begin{align}\label{eq:cac-eval}
Q_{\pi_{k+1}} \leftarrow \rewardNoT + \gamma \adaBracket{\mathbf{P} \avg{\pi}[Q_{\pi_{k}}(s, a)] + \entreg{\basepol}{\pi}(s)}.
\end{align}
Consider an initial Q value $Q_0: \state \times \action \rightarrow \reals$ with $|\action|<\infty$. 
With the repeated application of \EqNumber{eq:cac-eval}, the sequence $Q_k$ converges to the following entropy-regularized Q-value $\cviaval{\basepol}{\pol}$ as $k\rightarrow \infty$.
\begin{align}
\cviaval{\basepol}{\pol} (\sv, \av) :=  \avg{d^\pi}\left[ \sum_{t=0}^{\infty} \gamma^{t} \big(r_t + \mathcal{I}_{\basepol}^{\pi}\left(\stp\right) \big) \big | s_{0}=\sv, a_{0}=\av \right].
\end{align}
\label{lemma:cac-eval}
\end{lemma}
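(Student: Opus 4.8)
The plan is to recast the evaluation recursion in \Eqref{eq:cac-eval} as a fixed-point iteration $Q_{k+1} = \cviop{\basepol}{\pol} Q_{k}$ for a suitable backup operator and then invoke the Banach fixed-point theorem, exactly in the style of the soft policy-evaluation argument of \citep{pmlr-v80-haarnoja18b}. Concretely, I would define the operator
\[
(\cviop{\basepol}{\pol} Q)(\sv,\av) := \rewardNoT + \gamma\adaBracket{ \avg{\svp}\!\adaRectBracket{\avg{\av'\sim\pol}[Q(\svp,\av')]} + \avg{\svp}[\entreg{\basepol}{\pol}(\svp)] },
\]
where the expectation over $\svp$ is taken under $\trans$. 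Since both $\pol$ and $\basepol$ are held fixed throughout the evaluation step, the regularizer $\entreg{\basepol}{\pol}$ is a fixed, state-only function, so it is convenient to absorb it into an entropy-augmented reward $\tilde r^{\pol}(\sv,\av) := \rewardNoT + \gamma\,\avg{\svp}[\entreg{\basepol}{\pol}(\svp)]$ and treat $\cviop{\basepol}{\pol}$ as an ordinary (unregularized) policy-evaluation backup with reward $\tilde r^{\pol}$.

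First I would verify that $\tilde r^{\pol}$ is uniformly bounded. The environment reward lies in $[-r_{\text{max}}, r_{\text{max}}]$ by assumption, so it suffices to bound $\entreg{\basepol}{\pol}$. Because $|\action| < \infty$ and every policy generated by \Eqref{eq:cac} is a full-support Boltzmann policy, the Shannon-entropy and KL terms in $\entreg{\basepol}{\pol}(\sv) = \avg{\pol}[-\kappa\log\pol(\av|\sv) - \tau\log\frac{\pol(\av|\sv)}{\basepol(\av|\sv)}]$ are finite and bounded uniformly in $\sv$; hence $\tilde r^{\pol}$ lives in the space of bounded functions on $\state \times \action$, which is complete under $\myNorm{\cdot}$.

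Next I would establish that $\cviop{\basepol}{\pol}$ is a $\gamma$-contraction in the sup-norm. For any two bounded action-value functions $Q, Q'$, the reward and the regularization terms do not depend on the argument and therefore cancel in the difference, leaving
\[
\myNorm{\cviop{\basepol}{\pol} Q - \cviop{\basepol}{\pol} Q'} = \gamma\,\myNorm{ \avg{\svp}\!\adaRectBracket{\avg{\pol}[Q(\svp,\cdot) - Q'(\svp,\cdot)]} } \leq \gamma\,\myNorm{Q - Q'},
\]
since $\mathbf{P}$ and the expectation under $\pol$ are both averaging (non-expansive) operators. The Banach fixed-point theorem then yields a unique fixed point to which $Q_{k}$ converges from any initialization $Q_{0}$.

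Finally I would identify this fixed point with $\cviaval{\basepol}{\pol}$ by unrolling the backup: iterating the recursion and summing the resulting geometric series in $\gamma$ reproduces the infinite discounted sum defining $\cviaval{\basepol}{\pol}(\sv,\av) = \avg{d^{\pol}}[\sum_{t\ge 0}\gamma^{t}(r_{t} + \entreg{\basepol}{\pol}(\stp))]$, matching the claimed closed form. I expect the only genuinely delicate step to be the boundedness check for $\entreg{\basepol}{\pol}$: the contraction estimate itself is routine, but the whole argument silently relies on entropy regularization keeping the policies full-support, so that the log-ratio terms never blow up. This is precisely where the finite-action assumption $|\action| < \infty$ and the Boltzmann form of the CAC policies enter, and it is the hypothesis I would be most careful to state explicitly.
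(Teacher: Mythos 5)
Your proposal is correct and follows essentially the same route as the paper's proof: absorb the fixed regularizer $\entreg{\basepol}{\pol}$ into an entropy-augmented bounded reward and then apply the standard contraction-based convergence of policy evaluation. The paper states this in one line (citing the standard result), whereas you spell out the $\gamma$-contraction and boundedness details explicitly, which is a faithful expansion rather than a different argument.
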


\begin{proof}
Define the entropy augmented reward as ${\reward^{\pi}_{\basepol}}(\sv, \av) \triangleq \rewardNoT  + {\entreg{\basepol}{\pi}}(s)$ and rewrite the update rule as:
\begin{align}
Q_{\pi_{k+1}} \leftarrow \reward^{\pi}_{\basepol}(\sv, \av) + \gamma\mathbf{P} \avg{\pi}[Q_{\pi_{k}}(s, a)].
\end{align}
With the assumption $|\action|<\infty$ for bounded reward, we can apply the standard convergence results for policy evaluation~\citep{Sutton-RL2018}.
\end{proof}

\begin{lemma}[CAC Policy Improvement]
Given the current policy $\pi$, a baseline policy $\basepol$ and the updated policy $\pi_{\text{new}}$. CAC has the following policy update:
\begin{equation}
\begin{aligned}
&\pi_{\text{new}} = (1 - \zeta)\pi + \zeta \hat{\pi}, \\
&\text{where } \,\,  \hat{\pi}(\av \mid \sv) = \frac{\basepol^{\alpha}(\av \mid \sv) \exp \left(\beta \cviaval{\basepol}{\pol} (\sv, \av)\right)}{Z(\sv)},
\end{aligned}
\end{equation}
with $\zeta \in [0, 1]$.
Then, $\cviaval{\basepol}{\pol_{\text{new}}}(\sv, \av) \geq \cviaval{\basepol}{\pol}(\sv, \av)$ for all $(\sv, \av) \in \state\times\action$ with $|\action|<\infty$.
\label{lemma:cac-improvement}
\end{lemma}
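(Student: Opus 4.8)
The plan is to adapt the soft policy-iteration argument of \cite[Theorem 4]{pmlr-v80-haarnoja18b}, inserting one extra convexity step to handle the interpolation $\pi_{\text{new}} = (1-\zeta)\pol + \zeta\hat{\pi}$ in place of a pure greedy update. First I would introduce, for each state $\sv$ and action distribution $\mu$, the one-step regularized objective
$$J_{\sv}(\mu) := \avg{\mu}\adaRectBracket{\cviaval{\basepol}{\pol}(\sv,\av)} + \entreg{\basepol}{\mu}(\sv),$$
together with the fixed-policy evaluation operator $\cviop{\mu}{\basepol}$ given by $(\cviop{\mu}{\basepol}Q)(\sv,\av) = \reward(\sv,\av) + \gamma\,\mathbb{E}_{\svp}\big[\avg{\mu}[Q(\svp,\cdot)] + \entreg{\basepol}{\mu}(\svp)\big]$, which by \Lemmaref{lemma:cac-eval} is a $\gamma$-contraction with unique fixed point $\cviaval{\basepol}{\mu}$. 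In particular the fixed-point identity for $\pol$ reads $\cviaval{\basepol}{\pol}(\sv,\av) = \reward(\sv,\av) + \gamma\,\mathbb{E}_{\svp}[J_{\svp}(\pol)]$. Since $\hat{\pi}$ is by construction the unconstrained maximizer of $J_{\sv}$ (the Fenchel-conjugate/Boltzmann form of \cite{geist19-regularized,kozunoCVI}), we have $J_{\sv}(\hat{\pi}) \ge J_{\sv}(\pol)$ at every $\sv$.

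The crux is to transport this improvement through the mixture, which linearity alone cannot do because $\entreg{\basepol}{\mu}(\sv) = \kappa\,\entropy{\mu} - \tau\,D_{KL}(\mu\,\|\,\basepol)$ is nonlinear in $\mu$. Instead I would invoke concavity: the expected-$Q$ term is linear in $\mu$, the Shannon entropy $\entropy{\mu}$ is concave, and $-D_{KL}(\mu\,\|\,\basepol)$ is concave since KL is convex in its first argument, so for $\kappa,\tau\ge 0$ the map $\mu \mapsto J_{\sv}(\mu)$ is concave. Applying Jensen's inequality along the chord from $\pol$ to $\hat{\pi}$, and using $\zeta\in[0,1]$ with $J_{\sv}(\hat{\pi})\ge J_{\sv}(\pol)$,
$$J_{\sv}(\pi_{\text{new}}) \ge (1-\zeta)\,J_{\sv}(\pol) + \zeta\,J_{\sv}(\hat{\pi}) \ge J_{\sv}(\pol).$$

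With this one-step inequality I would close the argument by bootstrapping through the Bellman recursion. Substituting $J_{\svp}(\pol) \le J_{\svp}(\pi_{\text{new}})$ into the fixed-point identity gives $\cviaval{\basepol}{\pol}(\sv,\av) \le \reward(\sv,\av) + \gamma\,\mathbb{E}_{\svp}[J_{\svp}(\pi_{\text{new}})] = (\cviop{\pi_{\text{new}}}{\basepol}\cviaval{\basepol}{\pol})(\sv,\av)$, i.e. $\cviaval{\basepol}{\pol} \le \cviop{\pi_{\text{new}}}{\basepol}\cviaval{\basepol}{\pol}$. Because $\cviop{\pi_{\text{new}}}{\basepol}$ is monotone and, by \Lemmaref{lemma:cac-eval}, the $\gamma$-contraction whose fixed point is $\cviaval{\basepol}{\pol_{\text{new}}}$, iterating preserves the inequality and passes to the limit: $\cviaval{\basepol}{\pol} \le \lim_{n\to\infty}(\cviop{\pi_{\text{new}}}{\basepol})^{n}\cviaval{\basepol}{\pol} = \cviaval{\basepol}{\pol_{\text{new}}}$, which is exactly the claim.

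I expect the concavity/Jensen step to be the main obstacle and the genuinely new ingredient relative to standard soft policy iteration: there $\pi_{\text{new}}$ equals the maximizer and improvement is immediate, whereas here $\pi_{\text{new}}$ is a strict mixture, so the proof hinges on $J_{\sv}$ being concave in the action distribution and on $\zeta\in[0,1]$ keeping $\pi_{\text{new}}$ on the segment between $\pol$ and $\hat{\pi}$. Secondary care is needed to confirm that $\hat{\pi}$ maximizes precisely the functional $J_{\sv}$ appearing here, and that the hypothesis $|\action|<\infty$ is what simultaneously licenses the contraction and boundedness in \Lemmaref{lemma:cac-eval} and the existence of the Boltzmann maximizer.
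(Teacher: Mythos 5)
Your proposal is correct and follows essentially the same route as the paper: both arguments rest on (i) $\hat{\pi}$ being the maximizer of the one-step regularized objective and (ii) concavity of that objective in the policy, which transfers the improvement to the mixture $(1-\zeta)\pi + \zeta\hat{\pi}$, followed by the standard bootstrapping through the Bellman recursion to the fixed point. The only cosmetic difference is that the paper verifies concavity by explicitly computing the second derivative of the objective restricted to the segment (showing $f''(\zeta)\leq 0$ and hence that $f$ is nondecreasing with its maximum at $\zeta=1$), whereas you invoke concavity of the Shannon entropy and convexity of the KL divergence in its first argument directly.
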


\begin{proof}
Consider a function $f: \zeta \to \reals$ with $\zeta \in [0, 1]$:
\begin{equation}
f(\zeta) = \avg{\av \sim \pi_{\text{new}}}\left[\cviaval{\basepol}{\pi}(\sv, \av) \right] + \entreg{\basepol}{\pi_\text{new}}\left(\sv\right).
\end{equation}
From the definition of $\hat{\pi} \!=\! \underset{\pi}{\argmax} \;\avg{\av \sim \pi}\left[\cviaval{\basepol}{\pi}(\sv, \av) \right] \!+\! \entreg{\basepol}{\pi}\left(\sv\right)$, $f(\zeta)$ takes the maximum value when $\zeta=1$.
The first and the second derivative of $f(\zeta)$ w.r.t. $\zeta$ are:
\begin{equation}
    \begin{aligned}
    f^{'}(\zeta) = \sum_{a}&\left(\hat{\pi}(a|s) - \pi (a|s) \right) \left(\cviaval{\basepol}{\pi}(s, a)+\tau \log\basepol (a|s) \right) \\
    &\left. - (\sigma+\tau)\log((1 - \zeta)\pi(a|s) + \zeta \hat{\pi} (a|s)\right),\\
    \end{aligned}
\end{equation}

\begin{equation}
    \begin{aligned}
    f^{''}(\zeta) = -(\sigma + \tau)\sum_{a}\frac{\left(\hat{\pi}(a|s) - \pi(a|s) \right)^2}{(1 - \zeta)\pi(a|s) + \zeta \hat{\pol}(a|s)} \leq 0.
    \end{aligned}
\end{equation}

Thus, the function $f$ is concave in $\zeta \in [0, 1]$. 
Since $f(\zeta)$ takes the maximum value with $\zeta=1$, $f(\zeta)$ is monotonically increasing in $\zeta$ and $f(0)\leq f(\zeta)$.

Therefore, the following inequality about the entropy-regularized V-value $\cvival{\basepol}{\pol}(\sv)$ holds:
\begin{equation}\label{eq:er-q-ineq}
\begin{aligned}
\cvival{\basepol}{\pol}(\sv)=&\avg{\av \sim \pi}\left[\cviaval{\basepol}{\pi}(\sv, \av) \right] + \entreg{\basepol}{\pi}\left(\sv\right) \\
\leq
&\avg{\av \sim \pi_{\text{new}}}\left[\cviaval{\basepol}{\pi}(\sv, \av) \right] + \entreg{\basepol}{\pi_\text{new}}\left(\sv\right). \\
\end{aligned}
\end{equation}

By repeatedly applying \Eqref{eq:er-q-ineq}, we obtain the following inequalities:
\begin{equation}
\begin{aligned}
\cviaval{\basepol}{\pol}(\sv, \av)
&= \reward(\sv, \av) + \gamma\mathbf{P}\left[\cviaval{\basepol}{\pol}(s, a)+\entreg{\basepol}{\pol}(s)\right]\\
&\leq \reward(\sv, \av) + \gamma\mathbf{P}\left[\avg{\pi_{\text{new}}}[\cviaval{\basepol}{\pol}(s, a)]+\entreg{\basepol}{\pol_\text{new}}(s)\right]\\
& \; \vdots\\
& \leq \cviaval{\basepol}{\pol_\mathrm{new}}(\sv, \av).
\end{aligned}
\end{equation}

Convergence to $\cviaval{\basepol}{\pol_\mathrm{new}}$ follows from \Lemmaref{lemma:cac-eval}.
\end{proof}

Combining the policy evaluation and policy improvement, we are now ready to prove Theorem 1.

\textbf{Theorem 1}
\emph{Repeated application of CAC Eq. (\ref{eq:cac}) on any initial policy $\pi$ will make it converges to the entropy regularized optimal policy $\pol^{*}(a|s) = \frac{\exp\adaBracket{\frac{1}{\kappa} Q^{*}(s,a)}}{\int_{a\in\mathcal{A}} \exp\adaBracket{\frac{1}{\kappa} Q^{*}(s, a) } }$.}

\begin{proof}
According to \Lemmaref{lemma:cac-eval} and 
\Lemmaref{lemma:cac-improvement}, the entropy-regularized Q-value at $k$-th update satisfies $\cviaval{\pol_{k-1}}{\pol_k}(s,a) \geq \cviaval{\pol_{k-1}}{\pol_{k-1}}(s,a)$.
Given bounded reward, $\cviaval{\pol_k}{\pol_k}$ is also bounded from above and the sequence converges to a unique $\pol^{*}$. 
Note that when reaching the optimum the KL regularization term becomes $0$.
Hence, using the same iterative argument as in the proof of \Lemmaref{lemma:cac-improvement}, we get $\cviaval{\pol^{*}}{\pol^{*}}(\sv, \av) > \cviaval{\pol}{\pol}(\sv, \av)$ for all $(\sv, \av)\in \state\times\action$ and any $\pi$.
By \cite{ziebart2010-phd,pmlr-v80-haarnoja18b}, the optimal policy is entropy-regularized and hence has the softmax form $\pol^{*}(a|s) = \frac{\exp\adaBracket{\frac{1}{\kappa} Q^{*}(s,a)}}{\int_{a\in\mathcal{A}} \exp\adaBracket{\frac{1}{\kappa} Q^{*}(s, a) } }$ (recall from Eq. (\ref{eq:cac}) that $\kappa$ is the weight coefficient of entropy).
The convergence of general interpolated policy to the optimal policy follows the argument of \cite{Scherrer2014-localPolicySearch}.
\end{proof}

\subsection{Proof for the CAC gradient}\label{apdx:cac-gradient}

In this subsection we derive the gradient expression for CAC. 
For the ease of reading we rephrase the proposition here:

\textbf{Proposition 4}
    \emph{Let the actor network be parametrized by weights $\phi$ and critic by $\theta$.
    Define $\mathcal{G}Q_{\bar{\phi},\theta}$ as the greedy policy with respect to the CAC critic.
    The subscript $\bar{\phi}$ comes from the baseline policy introduced by KL divergence.
    Then the gradient of the actor update can be expressed as:}
    \begin{align}
        \begin{split}
        &\nabla_{\pparams} \avg{\substack{\sv\sim\rbuf\\\av\sim\pol_{\pparams}}}
        \left[ D^{\phi}_{\bar{\phi}}
        -\frac{\beta}{1+\const}Q_\theta(\sv, \av)
        \right], \\
        &\text{where } D^{\phi}_{\bar{\phi}} =  \log \pi_{\pparams}\left(\av  \mid \sv \right)
        -\frac{\alpha+\const}{1+\const}\log\pi_{\bar{\pparams}}(\av; \sv ) \\
        &\const = \frac{1-\zeta}{\zeta}\cdot
        \frac{\pi_{\bar{\phi}}(\av\mid \sv)}{\gr Q_{\bar{\phi}, \theta}(\av \mid \sv)} .
        \end{split}
    \end{align}
\begin{proof}
Using the reparameterization trick $a \!=\! f_{\phi}(\nv; \st)$ with $\nv$ a noise vector \citep{pmlr-v80-haarnoja18b}, the gradient of \Eqref{eq:phi} can be expressed as:
\begin{equation}\label{eq:cac_policy_loss_prev}
\begin{aligned}
\hat{\nabla}_{\phi} &J_{\pi}(\phi) \!=\!
\nabla_{\phi} \log \pi_{\phi}\left(\av  \mid \sv \right) 
+ \nabla_{\av} \log \pi_{\phi}\left(\av  \mid \sv \right)
\nabla_{\phi} f_{\phi}\left(\nv ; \st \right)\\
&-\nabla_{\av}\log\left(\!
(1 - \zeta)\pi_{\bar{\phi}}(\av \mid \sv)\right. 
 \!+\! \left.\zeta \gr Q_{\bar{\phi}, \theta}\left(\av \mid \sv\right) \!
\right) \!
\nabla_{\phi} f_{\phi}\left(\nv ; \st \right).
\end{aligned}
\end{equation}
We expand the term $\nabla_{\av}\!\log{\left(\! (1 - \zeta)\pi_{\bar{\phi}}(\av \mid \sv) + \zeta \gr Q_{\bar{\phi}, \theta}\left(\av \mid \sv\right)\!\right)}$ by using that $\nabla_{x_{i}}\!\log \left(\sum_{i} \exp x_{i}\right)$  $=\frac{\exp x_{i}}{\sum_{j} \exp x_{j}}$.

Let $\exp{\left(C_1(\av)\right)}=(1 - \zeta)\pi_{\bar{\phi}}(\av \mid \sv)$ and $\exp{\left(C_2(\av)\right)} = \zeta \gr_{\pi_{\bar{\phi}}} Q_{\theta}\left(\av \mid \sv\right)$.
We have the following transformation:
\begin{equation}\label{eq:derive}
    \begin{aligned}
    &\nabla_{\av}\log{\left(
    (1 - \zeta)\pi_{\bar{\phi}}(\av \mid \sv) + \zeta \gr Q_{\bar{\phi}, \theta}\left(\av \mid \sv\right)
    \right)} \\
    =& \nabla_{\av}\log{\left(\exp{\left(C_1(\av)\right)} + \exp{\left(C_2(\av)\right)} \right)}\\
    =& \frac{
    \left(D\frac{\partial}{\partial \av}\pi_{\bar{\phi}}(\av \mid \sv)
    +\alpha\frac{\partial}{\partial \av}\pi_{\bar{\phi}}(\av \mid \sv) 
    +\beta\frac{\partial}{\partial \av} Q_\theta(\sv, \av)\right)
    }{
    1 + D
    }, \\
    &\text{where } D = \exp{\left(C_1(\av)- C_2(\av)\right)}.
    \end{aligned}
\end{equation}
After replacing $D$ with $\const$ and inserting \Eqref{eq:derive} into \Eqref{eq:cac_policy_loss_prev}, we obtain \Eqref{eq:cac_sgd_grpi}.

\end{proof}

\subsection{Implementation details}\label{apdx:impl}

This section presents implementation details of CAC with deep networks.
Pseudo-code is provided in Algorithm \ref{alg:cac}.

\paragraph{On-policy replay buffer} 
To make the algorithm off-policy, we approximate the on-policy samples with on-policy replay buffer $\onrbuf$ which stores $K$ recent samples where $K$ is smaller than the size of the main replay buffer $\rbuf$.

\paragraph{Advantage estimation}
While it is possible to simply use the entropy-regularized advantage function $A_{\mathcal{I}}(s,a) \!=\! Q_{\mathcal{I}}(s,a) \!-\! V_{\mathcal{I}}(s)$ for computing $\zeta$, we are interested in studying the guidance of $\zeta$ when no entropy is involved since it might provide a more informative gradient improving direction. This corresponds to the case of \citep{kakade-cpi,pmlr-v28-pirotta13}.
To this end, we train another Q-network $Q_\omega$ by solving: 
\begin{equation}\label{eq:cac_A_grad}
\begin{aligned}
\omega \leftarrow &\arg \min \avg{\rbuf}\left[\left({Q}_{\omega}\left(\sv , \av \right)-y \right)^{2}\right],\\
\text{where }\, \, y &= r + \gamma \left(\avg{\av \sim \pi_\phi(\cdot|\svp)}\left[Q_{\bar{\omega}} (\svp, \av)\right] \right),\\
\end{aligned}
\end{equation}

where $\bar{\omega}$ is the target network.
Then we approximate the advantage as $A^{\pi}(s,a) = Q_{\omega}(s,a) - \avg{\pi_{\phi}}[Q_{\omega}(s, a)]$.
While the advantage estimation is expected to be further improved with the recent generalized advantage estimation, we found that the above simple implementation is sufficient to stabilize the learning.

\paragraph{Target smoothing}

For the target Q-networks $Q_{\bar{\theta}}$ and $Q_{\bar{\omega}}$, we update the parameters using the moving average \citep{pmlr-v80-haarnoja18b}:
\begin{equation}
\begin{aligned}
\bar{\theta} \leftarrow \ltheta \, \theta + (1-\ltheta) \bar{\theta},\\
\bar{\omega} \leftarrow \lomega \, \omega + (1-\lomega) \bar{\omega},\\
\end{aligned}
\end{equation}

where $\ltheta$ and $\lomega$ are the target smoothing coefficients.
In the mixing step, we use the previous policy $\pi_{\bar{\phi}}$ rather than the current policy $\pi_{\phi}$ to stabilize the training:
\begin{align*}
\avg{\rbuf}\left[{D}_{{KL}}\left(\pi_{\phi} \| (1-\zeta)\pol_{\bar{\pparams}} + \zeta \hat{\pol} \right)\right].
\end{align*}

Thus, the target policy $\pol_{\bar{\pparams}}$ corresponds to the monotonically improved policy in the CPI algorithm that is not updated when performance oscillation happens. 
To reflect this fact, we update the weight of the target policy network as:
\begin{equation}\label{eq:phi_bar}
\bar{\phi} \leftarrow \zeta \lphi \, \theta + (1-\zeta\lphi) \bar{\phi},
\end{equation}
where $\lphi$ is the target smoothing coefficient.

\paragraph{Normalization factor estimation}

Since CAC algorithm requires the density of the reference policy $\hat{\pi}(\av \mid \sv) = {\pi_{\bar{\phi}}^{\alpha}(\av \mid \sv) \exp \left(\beta Q_{\theta} (\sv, \av)\right)}(Z(\sv))^{-1}$, we need to estimate the normalization factor $Z(s)$.

A simple approach to estimate $\partition$ is by Monte-Carlo sampling with some distribution $q$ that is easier to sample from:
\begin{equation}\label{eq: Z}
    {Z}(\sv) = \avg{q} \left[ \frac{{\pi_{\bar{\phi}}}(\av \mid \sv)^{\alpha} \exp \left(\beta Q_{\theta}(\sv, \av)\right)}{q(\av \mid \sv)} \right].
\end{equation}
The closer $q(\cdot \mid \sv)$ and the reference policy $\hat{\pi}(\cdot \mid \sv)$ are, the better the accuracy of the $Z(\sv)$ approximation.

Theorem \ref{theorem:ck} indicates that by choosing the current policy $\pi$ as the proposal distribution, we can control the closeness of the two distributions and the accuracy of the MC approximation via changing the entropy regularization weighting coefficients.
We empirically study the effectiveness of entropy regularization against the closeness and the accuracy when $\zeta\leq 1$

We use the pendulum environment from \citet{pmlr-v97-fu19a} where the dynamics are discretized so that we can compute the oracle values such as the KL divergence between the current and the reference policy.
The hyperparameters used in the experiment is listed in Section \ref{apdx:hypers}.
\Figref{fig:pend} shows how the learning behavior of CAC changes when the interpolation coefficient $\zeta$ and KL regularization weight $\tau$ vary:
When KL regularization is present,  the approximation quality of $Z(s)$ is improved significantly.

\begin{figure*}[t]
\vskip 0.2in
\begin{center}
\centerline{\includegraphics[width=1.0\columnwidth]{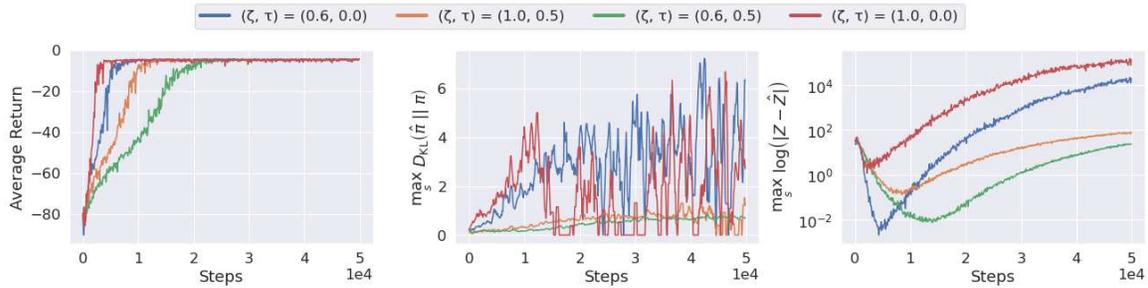}}
\caption{\textbf{Left:} The learning curves of CAC with different parameters on pendulum. \textbf{Middle:} The maximum KL divergence over all the states. \textbf{Right:} The maximum error of log-scaled $Z(s)$ approximation over all the states.}
\label{fig:pend}
\end{center}
\vskip -0.2in
\end{figure*}

\begin{algorithm}[tb]
\begin{algorithmic}[1]
\caption{Cautious Actor-Critic} \label{alg:cac}
\STATE Initialize parameter vectors $\theta$, $\phi$, $\bar{\theta}$, $\bar{\phi}$, $\omega$, $\bar{\omega}$
\STATE Initialize variable $\approxadv$ and $\approxadv^{\mathrm{MaxDiff}}$
\FOR{each iteration}
    \STATE Collect transitions by $\pi_\theta$ and add them to $\rbuf$ and $\onrbuf$
	\FOR{each gradient step}
	    \STATE Update $\theta$ with one step of SGD using \EqNumber{eq:theta}
	    \STATE Update $\omega$ with one step of SGD using \EqNumber{eq:cac_A_grad}
    	\STATE Update $\approxadv$ and $\approxadv^{\mathrm{MaxDiff}}$ using \EqNumber{eq:ERCAC-zeta2}
	    \STATE Update $\phi$ with one step of SGD using \EqNumber{eq:phi}
    	\STATE Update $\zeta$ using \EqNumber{eq:ERCAC-zeta}
    	\STATE $\bar{\vparams}\leftarrow \ltheta \vparams + (1-\ltheta)\bar{\vparams}$
    	\STATE $\bar{\omega}\leftarrow \lomega \omega + (1-\ltheta)\bar{\omega}$
    	\STATE $\bar{\phi} \leftarrow \zeta\lphi \phi+(1-\zeta\lphi) \bar{\phi}$
	\ENDFOR
\ENDFOR
\end{algorithmic}
\end{algorithm}

\begin{table}[h]
\renewcommand{\arraystretch}{1.1}
\centering
\caption{Hyperparameters of off-policy algorithms in mujoco tasks}
\label{tab:shared_params_1}
\vspace{1mm}
\begin{tabular}{l l| l }
\toprule
\multicolumn{2}{l|}{Parameter} &  Value\\
\midrule
\multicolumn{2}{l|}{\it{Shared}}& \\
& optimizer &Adam \\
& learning rate & $10^{-3}$\\
& discount factor ($\gamma$) &  0.99\\
& replay buffer size ($\rbuf$) & $10^6$\\
& number of hidden layers  & 2\\
& number of hidden units per layer & 256\\
& number of samples per minibatch & 100\\
& activations & ReLU\\
\midrule
\multicolumn{2}{l|}{\it{TD3}}& \\
& Stddev for Gaussian noise & 0.1  \\
& Stddev for target smoothing noise & 0.2  \\
& policy delay & 2 \\
\midrule
\multicolumn{2}{l|}{\it{SAC}}& \\
& entropy coefficient ($\kappa$) & 0.2\\
& $\bar{\theta}$ smoothing coefficient & 0.995\\
\midrule
\multicolumn{2}{l|}{\it{CAC}}& \\
& entropy coefficient ($\kappa$) & 0.2\\
& KL coefficient ($\tau$) & 0.1\\
& $\bar{\theta}$ smoothing coefficient ($\ltheta$) & 0.995\\
& $\bar{\omega}$ smoothing coefficient ($\lomega$) & 0.995\\
& $\bar{\phi}$  smoothing coefficient ($\lphi$) & 0.9999\\
& $\lA$ & $0.01$\\
& $\lAslow$ & $0.001$\\
& size of $\onrbuf$ & $1000$  \\
& \texttt{if-else} update & $c=M$ \\
\bottomrule
\end{tabular}
\end{table}

\begin{table}[h]
\renewcommand{\arraystretch}{1.1}
\centering
\caption{Hyperparameters of PPO in mujoco tasks}
\label{tab:shared_params_2}
\vspace{1mm}
\begin{tabular}{l l| l }
\toprule
\multicolumn{2}{l|}{Parameter} &  Value\\
\midrule
\multicolumn{2}{l|}{\it{PPO}}& \\
& optimizer &Adam \\
& value function learning rate & $10^{-3}$\\
& policy learning rate & $3\times 10^{-4}$\\
& discount factor ($\gamma$) &  0.99\\
& number of hidden layers  & 2\\
& number of hidden units per layer & 256\\
& number of samples per minibatch & 100\\
& activations & ReLU\\
& Number of samples per update & 80 \\
& Policy objective clipping coefficient & 0.2 \\
\bottomrule
\end{tabular}
\end{table}

\begin{table}[h]
\renewcommand{\arraystretch}{1.1}
\centering
\caption{Hyperparameters of CAC in pendulum task}
\label{tab:shared_params_3}
\vspace{1mm}
\begin{tabular}{l l| l }
\toprule
\multicolumn{2}{l|}{Parameter} &  Value\\
\midrule
\multicolumn{2}{l|}{\it{CAC}}& \\
& optimizer &Adam \\
& learning rate & $10^{-3}$\\
& discount factor ($\gamma$) &  0.99\\
& replay buffer size ($\rbuf$) & $10^6$\\
& number of hidden layers  & 2\\
& number of hidden units per layer & 256\\
& number of samples per minibatch & 32\\
& activations & ReLU\\
& entropy coefficient ($\kappa$) & 0.2\\
& $\bar{\theta}$ smoothing coefficient & 0.995\\
& $\bar{\phi}$  smoothing coefficient & 0.995\\
& size of $\onrbuf$ & $1000$  \\
& \texttt{if-else} update & $c=M$ \\
\bottomrule
\end{tabular}
\end{table}

\subsection{Hyperparameters} \label{apdx:hypers}
This section lists the hyperparameters used in the comparative evaluation Section \ref{sec:comparative}.

\end{document}